\title{A Sufficient Statistics Construction of Bayesian Nonparametric Exponential Family Conjugate Models}
\author{
Robert Finn \\
Department of Computer Science and Engineering\\
The Ohio State University\\
Columbus, OH\\
\texttt{finn@cse.ohio-state.edu} \\
\And
Brian Kulis\\
Department of Electrical and Computer Engineering \\
Boston University \\
Boston, MA\\ 
\texttt{bkulis@bu.edu} \\
}
\begin{document}

\maketitle

\begin{abstract}

Conjugate pairs of distributions over infinite dimensional spaces are prominent in statistical learning theory, particularly due to the widespread adoption of Bayesian nonparametric methodologies for a host of models and applications.  Much of the existing literature in the learning community focuses on processes possessing some form of computationally tractable conjugacy as is the case for the beta  and gamma processes (and, via normalization, the Dirichlet process). For these processes, proofs of conjugacy and requisite derivation of explicit computational formulae for posterior density parameters are idiosyncratic to the stochastic process in question. As such, Bayesian Nonparametric models are currently available for a limited number of conjugate pairs, e.g. the Dirichlet-multinomial, beta-Bernoulli, beta-negative binomial, and gamma-Poisson process pairs. It is worth noting that in each of these above cases, the likelihood process belongs to the class of \emph{discrete }exponential family distributions. The exclusion of \emph{continuous} likelihood distributions from the known cases of Bayesian Nonparametric Conjugate models stands as a glaring disparity in the researcher's toolbox.

Our goal in this paper is twofold. We first address the problem of obtaining a general construction of prior distributions over infinite dimensional spaces possessing distributional properties amenable to conjugacy.  Our result is achieved by generalizing Hjort's construction of the beta process via appropriate utilization of sufficient statistics for exponential families. Second, we bridge the divide between the discrete and continuous likelihoods by way of illustrating a \emph{canonical construction} for triples of stochastic processes whose L\'{e}vy measure densities are from positive valued exponential families, and subsequently demonstrate that these triples in fact form the prior, likelihood, and posterior in a conjugate family when viewed in the setting of conditional expectation operators. Our canonical construction subsumes known computational formulae for posterior density parameters in the cases where the likelihood is from a discrete distribution belonging to an exponential family.

\end{abstract}

\section{Introduction}

Since \cite{Raiffa} first formalized the notion of conjugate prior families in 1961, they have repeatedly earned their distinguished role as the key to the operability of Bayesian  modeling. Indeed, in both the parametric and nonparametric cases, the ability to perform statistical inference in a computationally efficient manner hinges on the existence of such conjugate prior families. 

Although conjugacy is key in both parametric and nonparametric modeling venues, the manners in which conjugacy assumes its role in these two arenas currently exist in unsettled contrast with another. To be sure, in the parametric setting, conjugate families such as the normal gamma, multinomial Dirichlet, and the Bernoulli beta, are familiar and often employed in the task of inference. Furthermore, such families afford the researcher convenient formulae for the updating of posterior parameters as a function of the prior parameters and observed data. Thus, the acceptance and application of conjugate families in the parametric, i.e. finite dimensional case, is received with a unified and uncontroversial disposition within the machine learning community. 

Unfortunately, at the present, in an infinite dimensional setting, no such intuitive methodology exists to derive simplistic computational formulae for the updating of posterior parameters. Consequently, no unified theory exists for the treatment of conjugate families in the case of Bayesian nonparametric models. This state of affairs is quite unfortunate as Bayesian nonparametric (BNP) modeling is a prominent and widely used technique in the machine learning community, providing a broad class of statistical models which are more flexible than classical nonparametric models and more robust than both classical and Bayesian parametric models. BNP models such as the Chinese restaurant process, the Indian buffet process, and Dirichlet process mixture models have obtained great success in problem domains such as clustering, dictionary learning, and density estimation, respectively \cite{Aldous,Ferguson,Griffiths,Kulis}. This success is in large part due to the adaptive nature of BNP models. As such, this modeling framework permits the data to determine the level of model complexity rather than  entail the specification of the complexity level by the researcher.

 In order for the adaptive framework of BNP to yield a computationally tractable model, one is commonly required to construct a pair of conjugate distributions defined over an infinite dimensional space,a process which, to date obtains no general theory or procedure. Given that the class of BNP models greatly enrich the researcher's toolkit, we feel they are deserving of the effort required to achieve a greater understanding of how to unify the notion of conjugacy and subsequent proof apparatuses thereof in an infinite dimensional setting. 

This lack of readily perceptible and intuitive updating formulae of posterior parameters conveys an enigmatic character to the proof techniques applied in the derivation of such updating formulae. While this objectionable state of affairs has inspired a number of researchers to produce intricate and creative proofs of the validity of updating formulae, the apparent asymmetry of complexity in the parametric and nonparametric cases serves as an obstruction to the free usage of a wider class of nonparametric models. This apparent asymmetry will be shown, in fact, to be illusory. We will prove in the present work that updating formulae for a wide class of nonparametric, i.e. infinite dimensional, exponential family models are in fact derivable \emph{directly} from the updating formulae for the analogous finite dimensional models under mild regularity conditions, and that this illusion may be dispelled in the cases of \emph{both} discrete \emph{and} continuous likelihoods. The general applicability of the theorems herein stand in contrast to previous results and methodologies of their derivation regarding conjugacy of singularly specific pairs of stochastic processes. 

Previously, success in constructing such conjugate pairs of stochastic processes in an infinite dimensional setting has been achieved in specific cases producing, for example, the Dirichlet, gamma, and beta processes priors. In each of these cases, the construction of a suitable likelihood/prior pair and subsequent illustration of the desired form for the posterior yielding conjugacy, is tailored to the conjugate pair in question. For example, in the case of the Dirichlet process, conjugacy of the multinomial and Dirichlet processes arises directly from the conjugacy of their marginals. In contrast, conjugacy in the case of the beta and Bernoulli processes is defined and obtained in a seemingly less direct manner, i.e. via the form of the densities for the processes' respective L\'{e}vy measures. Although this modification of the definition of conjugacy which employs the density of the associated L\'{e}vy measures, deviates from the standard definition in the finite dimensional, i.e. parametric, setting, it does yield a tractable definition which directly relates to a specification of the distributional properties of the \emph{infinitesimal} increments of the stochastic processes under consideration.

Construction of families which are conjugate in the modified sense requisite for a nonparametric setting, entails both the construction of a prior with distributional qualities appropriate to the modeling task at hand \emph{and} the ability to derive a conjugate posterior from this form of the prior and the chosen likelihood. With respect to the latter of the two constructions, i.e. the existence of a conjugate posterior, a number of distribution specific techniques have been successful, e.g. in the cases of the gamma and beta processes. Such constructions succeed by way of the derivation of explicit formulae for the updating of posterior parameters as a function of the prior parameters and observed data. Specifically, this has been achieved in cases such as the Dirichlet-multinomial, gamma-Poisson, beta-Bernoulli, and beta-negative binomial processes \cite{Lin,Hjort,Thibaux,Broderick}. In these cases, although conjugacy is addressed via the modified definition employing the density of the associated L\'{e}vy process, the actual proof techniques employed in establishing this form of conjugacy are conceptually orthogonal to one another in each case and require lengthy, involved, and dense arguments.  A limited number of more general construction techniques either implicitly lurk in the theory of conditional measures over infinite dimensional spaces, or have explicitly been formulated to prove the existence of a conjugate prior, albeit with varying success. For example, recently \cite{Orbanz} has obtained a mathematical framework  for proving the existence of a conjugate posterior over an infinite dimensional space \emph{given} one has in hand an appropriate prior over that space. Unfortunately, he requires the stochastic processes under consideration to be constrained to those of the form $(X_n)_{n \in C}$, where $C$ is a \emph{countably infinite} indexing set. This condition posses major restrictions to the immediate applicability of his theorem. In particular, Gaussian, gamma, and beta process fall outside the sphere of operability of his result.

 In this paper, which is a continuation of the work completed in \cite{Finn}, we address the issue of obtaining: (i)  a \emph{general} construction for conjugate priors over an infinite dimensional space, as well as (ii) a derivation of \emph{explicit} updating formulae for posterior parameters as functions of prior parameters and observed data. The first of these two goals will be achieved by generalizing \cite{Hjort}'s construction of the beta process via appropriate utilization of sufficient statistics for exponential families. To accomplish the second goal,  we show how to obtain a \emph{canonical} construction for triples of stochastic processes whose L\'{e}vy measure densities are from positive valued exponential families and which reflect the conjugacy relationship between prior, likelihood, and posterior. This canonical methodology will be a direct consequence of the construction of exponential family L\'{e}vy measure densities for nonparametric conjugate models achieved in the first part of our program. We will then demonstrate that the triples produced by this canonical construction in fact form the prior, likelihood, and posterior in conjugate positive valued exponential families when cast in the setting of conditional expectation operators. Updating formulae for the parameters governing the density of the processes' L\'{e}vy measure, analogous to the updating of the parameters of density of the random variable in the finite dimensional case will be achieved.  
\section{Statistical Preliminaries}

In this section we first recall the definition of a completely random measure, and provide a brief account of their connection to stochastic processes, as well as state the usual representation theorem for completely random measures in terms of Poisson processes. We then gather requisite facts regarding exponential families and their sufficient statistics, as well as provide an extension of a well known representation of the moments of the sufficient statistics to the setting of variational calculus. Finally, we turn to the measure theoretic necessity of providing a rigorous definition of conditional expectations and probabilities required for our framework.

\subsection{Completely random measures}

A random measure, $\Phi$, is a function whose domain is a measure space $(\Omega, \mathcal{F},\mu)$ and whose range is a space of measures over a state space $(S,\Sigma)$,  where we take $\Sigma$ to be a $\sigma$-algebra of subsets of $S$. In other words, for each $\omega$ $\in$ $\Omega$ we have $\Phi(\omega,\cdot)$ is a measure on $(S,\Sigma)$, and for each fixed $A$ $\in$ $\Sigma$, the function $\Phi(\cdot,A):\Omega \longrightarrow \mathbb{R}^+$ is an $\mathcal{F}$-measurable function. In this way we may view a random measure $\Phi$ as a collection of random variables over $\Omega$ indexed by the elements of $\Sigma$. 

 A random measure $\Phi$ is said to be a \emph{completely} random measure if for any finite collection $A_1,A_2,\ldots,A_n$ of elements of $\Sigma$ which are pairwise disjoint, the corresponding random variables $\Phi(\cdot,A_1), \Phi(\cdot,A_2),\ldots,\Phi(\cdot,A_n)$ are independent. To motivate a bit of intuition in this situation, take for example $S=[0,+\infty)$ and $\Sigma$ to be the collection of Borel measurable subsets of $S$. Then the condition of complete randomness implies for any $t_1 < t_2 < \ldots < t_n$ we have $\Phi(\cdot,(t_1,t_2]), \Phi(\cdot,(t_2,t_3]),\ldots,\Phi(\cdot,(t_{n-1},t_n])$ are independent. This is reminiscent of the situation where $X(t)$ is a nondecreasing stochastic process with independent increments in the sense that for any $t_1 < t_2 <\ldots < t_n$ we have $X(t_2)-X(t_1), X(t_3) - X(t_2),\ldots,X(t_n) - X(t_{n-1})$ are independent, and $\Phi$ is the unique Borel measure for which $\Phi((a,b]) = X(b+) - X(a+)$ for all $a < b$. As particular examples of this definition, both the beta and the gamma processes satisfy the conditions of a completely random measure.

\begin{figure}
\includegraphics[width=14cm, height=9cm]{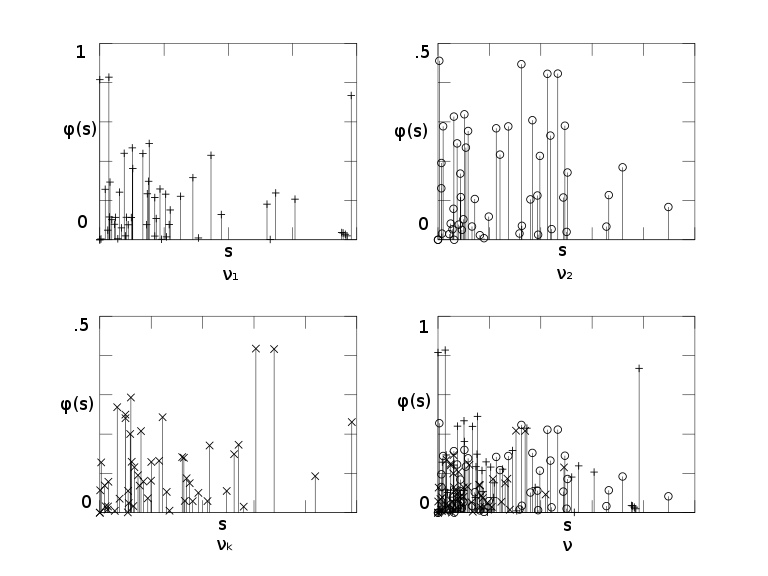}
\caption{\hspace{-.2cm} A L\'{e}vy measure $\nu$ decomposed into $\{\nu_n\}_{n=1}^{\infty}$. Each measure $\nu_n$ is constructed from a Poisson process, $\Pi_n$, on $S\times(0,\infty]$ yielding an atomic measure with with weights $\phi(s)$ assigned to the point mass at $s$ where $(s,\phi(s))\hspace{-.1cm} \in \Pi_n$. The L\'{e}vy measure $\nu$ is the superposition of the measures $\nu_1,\nu_2,\ldots,\nu_k,\ldots$.} 
\end{figure}

\cite{Kingman67,Kingman93} showed that any completely random measure $\Phi$ has a decomposition into independent completely random measures of the form $\Phi = \Phi_f + \Phi_d + \Phi_o$, where $\Phi_f$ corresponds to the fixed atoms of $\Phi$, $\Phi_d$ is the deterministic component of $\Phi$, and $\Phi_o$ is a purely atomic measure. In general, it is the measure $\Phi_o$ that is of interest. One can show that for any $A\in\Sigma$ the random variable $\Phi_o(\cdot,A)$ is infinitely divisible in the sense that for any $n$ there exists a decomposition of $A$ into pairwise disjoint sets $A_1, A_2,\ldots,A_n\in\Sigma$ such that
\begin{equation*}
\mathbb{E}[\exp(-\Phi_o(A_i))] = \{\mathbb{E}[\exp(-\Phi_o(A))]\}^{\frac{1}{n}}\hspace{.2cm}i=1,\ldots,n.
\end{equation*}
This property implies that the transform $\mathbb{E}[e^{-t\Phi_o(A)}]$ has the form
\begin{equation*}
\mathbb{E}[e^{-t\Phi_o(A)}]=\exp\bigg(-\int_{A\times (0,\infty]}(1-e^{-ts})\nu(dx,ds)\bigg).
\end{equation*}

The measure $\nu$ constructed in \cite{Levy} is now commonly referred to as the L\'{e}vy measure  of the random variable $\Phi_o(A)$ and is of great importance as it determines the random variable $\Phi_o(\cdot,A)$.

Kingman proved the L\'{e}vy measure $\nu$ and the completely random measure $\Phi_o$ have parallel decompositions of the form 
\begin{equation*}
\nu = \sum_n \nu_n, \hspace{1cm}\Phi_0 = \sum_n \Phi_n, \mbox{\hspace{.3cm}where for each }n,\hspace{.3cm} \Phi_n = \sum_{(s,\phi(s)) \in \Pi_n}\phi(s)\delta_s.
\end{equation*}
Here, for each index $n$, $\Pi_n$ is a Poisson process on $S\times(0,\infty]$, and $s \in S$ is an atom of weight $\phi(s) \in (0,\infty]$. It is precisely this decomposition derived from the L\'{e}vy measure that permits simulation of the completely random measure via simulation of the Poisson processes which constitute the decomposition, where each $\nu_n$ is used as the mean measure  of the Poisson process $\Pi_n$. The decomposition which allows one to view the completely random measure as a composition of atomic measures defined by Poisson processes is illustrated in Figure 1. \cite{Wang} gave a detailed analysis of this decomposition and subsequent simulation techniques for the L\'{e}vy measures arising from the beta and gamma processes.

\subsection{Exponential families}

We now turn to a brief accounting of exponential families and properties of their sufficient statistics. Standard references for material on exponential families are \cite{TPE} and \cite{TSH}.

A family $\{P_{\theta}\}_{\theta \in \Theta}$ of distributions over a probability space $(\Omega, \mathcal{F})$ is said to constitute an $n$-dimensional exponential family if the distributions have densities of the form $p_{\theta}(x)=h(x)e^{(\langle\eta(\theta),T(x)\rangle-B(\theta))}\mu(dx)$ with respect to some common measure $\mu$. In the above, $\eta(\theta)=(\eta_1(\theta),\ldots,\eta_n(\theta))$, and $\eta_i$ and $B$ are real-valued functions of the parameter $\theta$. In addition,  $T(x)=(T_1(x),\ldots,T_n(x))$, the $T_i$ are real-valued statistics where $x$ is in  the support of the density, and $\langle\eta,T(x)\rangle$ denotes the usual inner product of $\eta$ and $T(x)$. While this is the formal definition of an exponential family, the form commonly used is obtained by employing the $\eta_i$, $i=1\ldots n$,  as the parameters and writing the density in what is known as the \emph{canonical form} $p(x|\eta)=h(x)e^{(\langle\eta,T(x)\rangle-A(\eta))}\mu(dx)$. The integrand of the density in its canonical form is a positive function and will yield a bona fide probability distribution if and only if 
\begin{equation*}
\int h(x)e^{(\langle\eta,T(x)\rangle-A(\eta))}\mu(dx) = 1, \hspace{.1cm}\mbox{which is equivalent to} 
\end{equation*}
\begin{equation*}
\int h(x)e^{\langle\eta,T(x)\rangle}\mu(dx)=e^{A(\eta)} < +\infty.
\end{equation*}
The collection of all such $\eta$ for which this holds is a convex set called the \emph{natural parameter space} and is denoted by $\Xi$. Many common distributions belong to exponential families; for example, the normal, beta, and gamma distributions are all members of the exponential family of distributions.

The real valued functions, i.e. the statistics,  $T_i(x)$, $i=1\ldots n$ appearing in the expression for the densities of an exponential family posses a pleasant property known as \emph{sufficiency}. A statistic $T$ for a random observation $X$ is said to be \emph{sufficient} for the family $\{P_{\theta}\}$ of possible distributions for $X$ if the conditional expectation of $X$ given $T=t$ is independent of $\theta$ for all $t$. The property of sufficiency has numerous and varied consequences which, although of great importance in many branches of statistics, only one of which will concern us in this paper. This property allows the computation of moments of $T$, a variant of which will supply a crucial step in the proof of our extension of Hjort's result. This well-known result relates the moments of $T$ to the partial derivatives of the normalizing factor $\exp(A(\eta))$ with respect to the natural parameters. In fact, it is shown in \cite{TPE} that for any $k$ where $1\leq k \leq n$ and any $m \geq 1$ we have 
\begin{equation}\label{eq0}
\mathbb{E}[T_k^m]=\int T_k^m(x)(h(x)\exp\big(\langle\eta,T(x)\rangle-A(\eta)\big)\mu(dx) = e^{-A(\eta)}\frac{\partial^m[e^{A(\eta)}]}{\partial \eta_k^m}.
\end{equation}
We require a variant of this result which replaces the natural parameter $\eta$ by a function $\eta(z)$ defined on $(0,\infty)$. Thus, rather than taking the natural parameters to freely vary we require them to be in the range of the function $\eta(z)$. Given this modification, the variant of \eqref{eq0} involves functional derivatives, a common device from variational calculus. The result in Lemma 1 follows from the definition of and standard properties of the functional derivative. For the sake of completeness we briefly provide the definition of a functional derivative below. A standard reference for the definitions that follow is \cite{GelfandFomin}.

Given a space of functions, say $\mathcal{B}$ is the space of piecewise continuous functions on $(0,\infty)$, and a functional $F:\mathcal{B} \longrightarrow \mathbb{R}$, the functional derivative of $F$ with respect to $a(x) \in \mathcal{B}$ is defined as
\begin{equation*}
\frac{\partial F[a(x)]}{\partial a(x)} = \lim_{\varepsilon \to 0} \frac{F[a(x)+ \varepsilon \delta(x)] - F[a(x)]}{\varepsilon},
\end{equation*}
where $\delta(x)$ is an element of a class of test functions, usually taken to be a class of indicator functions or the class of bump functions on the domain of $\mathcal{B}$.

The statement of the result is as follows:
\newtheorem{Lemma}{Lemma}
\begin{Lemma}
Let $\eta(z)=(\eta_1(z),\ldots,\eta_n(z))$ be a piecewise continuous function on $(0,\infty)$ such that $\eta((0,\infty)) \subseteq$\hspace{.1cm}$\Xi$\hspace{.05cm}, where \hspace{.1cm}$\Xi$\hspace{.1cm} is the nonempty natural parameter space of the exponential family given by

\begin{equation*}
p(x|\eta)_{\theta}=h(x)\exp\big(\langle \eta, T(x)\rangle-A(\eta)\big)\mu(dx).
\end{equation*}
\noindent
Then the functional $\mbox{m}^{th}$-derivative with respect to $\eta_k(z)$, $1 \leq k \leq n$ exists and the $\mbox{m}^{th}$ moment of $T_k(x)$ has the form
\begin{equation}\label{eq1}
\mathbb{E}[T_k^m] = e^{-A(\eta(z))}\frac{\partial^m}{\partial(\eta_k(z))^m}\big[e^{A(\eta(z))}\big].
\end{equation}
\end{Lemma}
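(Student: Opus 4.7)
The plan is to reduce the functional-derivative identity \eqref{eq1} to the classical partial-derivative identity \eqref{eq0} by interpreting the functional derivative as a Gateaux-type derivative at the point $z$, interchanging limit with integration, and then iterating. Throughout I would work with $\eta(z)$ taking values in the interior of the natural parameter space $\Xi$, which is where the relevant Laplace-type integrals converge and admit differentiation under the integral sign.

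First I would fix $z \in (0,\infty)$ and regard
\[
F[\eta_k(\cdot)] := e^{A(\eta(z))} = \int h(x)\exp\langle\eta(z),T(x)\rangle\,\mu(dx)
\]
as a functional of the coordinate function $\eta_k(\cdot)$, with the other coordinate functions $\eta_j(\cdot)$, $j \neq k$, held fixed. Applying the definition of the functional derivative to the perturbation $\eta_k(\cdot) \mapsto \eta_k(\cdot) + \varepsilon\delta(\cdot)$, expanding $\exp(\varepsilon\delta(z)T_k(x))$ to first order, and passing the limit $\varepsilon \to 0$ inside the integral by dominated convergence (justified below), I would obtain
\[
\frac{\partial}{\partial\eta_k(z)} e^{A(\eta(z))} = \int T_k(x)\,h(x)\exp\langle\eta(z),T(x)\rangle\,\mu(dx).
\]
An induction on $m$, where each successive functional derivative pulls down another factor of $T_k(x)$ under the integral, then yields
\[
\frac{\partial^m}{\partial\eta_k(z)^m} e^{A(\eta(z))} = \int T_k^m(x)\,h(x)\exp\langle\eta(z),T(x)\rangle\,\mu(dx) = e^{A(\eta(z))}\,\mathbb{E}[T_k^m],
\]
where the expectation is taken under $p(x\,|\,\eta(z))$. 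Dividing both sides by $e^{A(\eta(z))}$ delivers \eqref{eq1}.

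The main obstacle is rigorously justifying the exchange of the limit in $\varepsilon$ with the integral against $\mu$ for each order $m$ and uniformly across the test-function perturbation $\delta$. The needed bound follows from the fact that on the interior of the convex set $\Xi$ the cumulant functional $A(\eta)$ is real-analytic and all moments of the $T_k$ are finite; consequently there exists $\varepsilon_0 > 0$ such that the difference quotients are dominated, for all $|\varepsilon| < \varepsilon_0$, by the integrable envelope $|T_k(x)|^m h(x)\exp\langle\eta(z) + \varepsilon_0\delta(z)e_k,T(x)\rangle$. The assumption that $\eta(\cdot)$ is piecewise continuous and maps into $\Xi$ ensures pointwise evaluation at $z$ is well-defined away from the at-most-countable discontinuity set, where the identity can be recovered via one-sided limits. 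With these analytic prerequisites in hand, the iterative interchange of functional derivative and integral is routine and the lemma follows.
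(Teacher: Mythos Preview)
Your proposal is correct and takes essentially the same approach as the paper, which offers only the one-line remark that ``the result in Lemma~1 follows from the definition of and standard properties of the functional derivative.'' Your argument is a faithful fleshing-out of that sentence: you reduce the functional derivative at $z$ to the ordinary partial derivative in the $k$th natural parameter, invoke differentiation under the integral (the classical moment identity \eqref{eq0}), and iterate; the dominated-convergence justification you supply is exactly the ``standard property'' the paper is gesturing at.
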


\subsection{Conditional Probabilities}
The classical notion of conjugacy is most commonly expressed in the language of conditional probabilities, which, in the finite dimensional case can be stated quite simply. In fact, given a probability space $(\Omega,\Sigma,P)$ and events $A,B \in \Sigma$, the conditional probability of $A$ given $B$, denoted $P(A|B)$, is defined as $\frac{P(A \cap B)}{P(B)}$, subject to the obvious restriction that $P(B)\neq 0$. In the infinite dimensional case this definition is still operable, albeit quite restrictive, as many interesting events may in fact have probability zero. This phenomenon is witnessed by the case that $(X_t)_{t \geq 0}$ is a continuous valued stochastic process, e.g. Brownian motion, and the event $B$ is an observation of the value of the process at some time $t_0 > 0$. 

The obstacle posed by the desire to divide by an event of probability zero can be outflanked by an application of the \emph{Radon-Nikod\'{y}m} theorem which serves as the crucial mathematical tool in successfully defining a conditional expectation operator on a space of integrable functions over a general measure space $(\Omega,\mathcal{F},\mu)$. Although one can easily achieve this level of generality, we will restrict our discussion in the sequel to the case of probability spaces $(\Omega,\Sigma,P)$.

We begin our formal definition of conditional probabilities by first defining the notional of a conditional expectation operator, and then via this definition pass to the definition of a conditional probability as a particular case of the latter. For the reader who is either unfamiliar with this topic, or desires a more in depth treatment of the topic than we are able to provide within, we note that the presentation in this section is based on the excellent source \cite{Rao}. 

Continuing on, let $(\Omega,\Sigma,P)$ be a probability space, $f:\Omega \longrightarrow \mathbb{R}$ an integrable random variable, and $\mathcal{B} \subset \Sigma$ a $\sigma$-algebra. Considering 
\begin{center}
$\nu_f: A \longrightarrow \displaystyle\int_A fdP$,\hspace{.2cm}$A \in \mathcal{B}$
\end{center}

\noindent
we have a $\sigma$-additive set function on $\mathcal{B}$ such that $\nu_f \ll P_{\mathcal{B}}$, i.e. $\nu_f$ is absolutely continuous with respect to  $P_{\mathcal{B}}$, which denotes $P$ restricted to $\mathcal{B}$. Thus, by the Radon-Nikod\'{y}m theorem there exists a $P_{\mathcal{B}}$-unique integrable random variable $\tilde{f}$ on $(\Omega,\mathcal{B},P_{\mathcal{B}})$ such that  
\begin{center}
$\nu_f(A)=\displaystyle\int_A \tilde{f}dP_{\mathcal{B}}$,\hspace{.2cm}$A \in \mathcal{B}$.
\end{center}

The random variable $\tilde{f}$ is \emph{defined} to be the conditional expectation of $f$ given $\mathcal{B}$ and is denoted by $E(f|\mathcal{B})$. We derive the definition of the conditional probability of $A \in \Sigma$  given $\mathcal{B}$ by simply applying the above definition to the characteristic function of $A$, i.e. $P(A|\mathcal{B})=E(\chi_A|\mathcal{B})$. Note that with this definition of a conditional probability we have the intuitive formula

\begin{center}
$\displaystyle\int_B P(A|\mathcal{B})dP_{\mathcal{B}}=\displaystyle\int_B \chi_AdP=P(A\cap B),\hspace{.2cm}A \in \Sigma,\hspace{.2cm}B \in \mathcal{B}$.
\end{center}

While the above definition provides a mathematically sound framework for the notion of conditional probability, what we require at this point is a methodology to apply this machinery to the notion of  $P(A| X=x_0)$ for a random variable $X$, even though the event $\{X=x_0\}$ is not a $\sigma$-algebra. Closely related to the event $\{X=x_0\}$ is the $\sigma$-algebra $X^{-1}(\mathcal{B}(\mathbb{R}))$ where $\mathcal{B}(\mathbb{R})$ is the Borel $\sigma$-algebra on $\mathbb{R}$. We thus turn to an alternative definition of conditional expectations, which, when combined with the \emph{Doob-Dynkin lemma} will provide the crucial bridge between $P(A| X=x_0)$ and $P(A|X^{-1}(\mathcal{B}(\mathbb{R}))$.

To that end, let $(\Omega,\Sigma,P)$ be a probability space, $(S,\mathcal{S})$ a measurable space, $h:\Omega \longrightarrow \mathbb{R}$ a $(\Sigma,\mathcal{S})$ measurable function, and $\mathcal{B}=h^{-1}(\mathcal{S})$. For any $P$-integrable function, $f$, let $\tilde{\nu_f}=\nu_f \circ h^{-1}$. Then $\tilde{\nu_f}$ is $\sigma$-additive on $(S,\mathcal{S})$ and defining $\mu = P \circ h^{-1}: \mathcal{S} \longrightarrow \mathbb{R}$ we have $\tilde{\nu_f} \ll \mu$. Hence, by the Radon-Nikod\'{y}m theorem there exists a $\mu$-unique integrable function $\tilde{g_f}: \mathcal{S} \longrightarrow \mathbb{R}$ such that

\begin{center}
$\tilde{\nu_f}(A)= \displaystyle\int_A \tilde{g_f}d\mu=\nu_f(h^{-1}(A))=\displaystyle\int_{h^{-1}(A)}fdP ,\hspace{.2cm}A \in \mathcal{S}$.
\end{center}

 The random variable $\tilde{g_f}$ is \emph{defined} to be the conditional expectation of $f$ given $h$ and is denoted by $E(f|h=s),\hspace{.2cm}s \in S$. Similar to the previous case, we derive the definition of the conditional probability of $A \in \Sigma$  given $h$ by applying the above definition to the characteristic function of $A$, i.e. $P(A|h=s)=E(\chi_A|h=s),\hspace{.2cm}s \in S$. We may easily obtain the first definition of conditional expectations from the second by simply letting $S=\Omega$, $\mathcal{S}=\mathcal{B} \subset \Sigma$, and $h(s)=s$.

In order to complete the bridge between $P(A| X=x_0)$ and $P(A|X^{-1}(\mathcal{B}(\mathbb{R}))$ we first remind the reader of the \emph{Doob-Dynkin lemma}.

\begin{Lemma}
Let $(\Omega_i,\Sigma_i)$, $i=1,2$ be measurable spaces and $f:\Omega_1 \longrightarrow \Omega_2$ be $(\Sigma_1,\Sigma_2)$ measurable. Then any function $g:\Omega_1 \longrightarrow \mathbb{R}$ is $f^{-1}(\Sigma_2)$-measurable iff $g=h \circ f$ for some measurable $h:\Omega_2 \longrightarrow \mathbb{R}$.
\end{Lemma}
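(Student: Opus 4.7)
The plan is to use the canonical measure-theoretic bootstrap: establish the identity $g = h\circ f$ first for indicators, extend to simple functions by linearity, lift to nonnegative measurable functions by a pointwise limit, and finally decompose a general measurable $g$ into positive and negative parts. Before doing so I would dispatch the reverse implication in one line: if $g = h \circ f$ with $h:(\Omega_2,\Sigma_2) \to (\mathbb{R},\mathcal{B}(\mathbb{R}))$ measurable, then for any Borel $B$ we have $g^{-1}(B) = f^{-1}(h^{-1}(B))$, and $h^{-1}(B) \in \Sigma_2$ yields $g^{-1}(B) \in f^{-1}(\Sigma_2)$.

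For the forward direction, stage one handles $g = \chi_A$ with $A \in f^{-1}(\Sigma_2)$: choose $B \in \Sigma_2$ with $A = f^{-1}(B)$ and set $h = \chi_B$, giving $h \circ f = \chi_{f^{-1}(B)} = g$. Stage two handles simple $g = \sum_{i=1}^{n} c_i \chi_{A_i}$ by pulling each $A_i$ back to some $B_i \in \Sigma_2$ and forming $h = \sum_{i=1}^{n} c_i \chi_{B_i}$; compatibility of preimages with finite sums delivers $h \circ f = g$. Stage three takes a nonnegative, $f^{-1}(\Sigma_2)$-measurable $g$, approximates it from below by a pointwise increasing sequence $g_n$ of simple functions with $g_n = h_n \circ f$ by stage two, and defines $h:\Omega_2 \to [0,\infty]$ by $h(y) = \limsup_n h_n(y)$; then $h$ is $\Sigma_2$-measurable, and on the image $f(\Omega_1)$ one has $h(f(\omega)) = \limsup_n g_n(\omega) = g(\omega)$. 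Stage four extends to arbitrary measurable $g$ via $g = g^+ - g^-$, producing $h^+, h^-$ separately and setting $h = h^+ - h^-$ with any convention on the exceptional null set where both are infinite.

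The step I expect to be the main obstacle is stage three, specifically the honest handling of values of $h$ off the image of $f$. The pointwise sequence $h_n(y)$ need not converge at points $y \notin f(\Omega_1)$, and even where it does converge it need not bear any relation to $g$; this is precisely why the definition is forced to take a $\limsup$ (or any fixed measurable selection on the complement of $f(\Omega_1)$). The nontrivial content is measurability of $h$ on all of $\Omega_2$, secured by observing that each $h_n$, constructed in stage two purely from sets $B_{i,n} \in \Sigma_2$, is itself $\Sigma_2$-measurable on all of $\Omega_2$ independently of $f$, so that the $\limsup$ of such functions inherits $\Sigma_2$-measurability from the standard stability result. The relation $g = h\circ f$ is only required, and only recovered, on the image of $f$, which is exactly what the statement demands.
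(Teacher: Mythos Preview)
Your argument is the standard simple-function bootstrap for the Doob--Dynkin lemma and is essentially correct. Note, however, that the paper does not prove this lemma at all: it is stated as Lemma~2 with the preface ``we first remind the reader of the \emph{Doob-Dynkin lemma},'' and is invoked as a known result (the ambient reference is \cite{Rao}). So there is no paper proof to compare against; you have supplied what the paper omits.

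One small correction: in stage four you speak of ``the exceptional null set where both are infinite.'' There is no measure on $(\Omega_2,\Sigma_2)$ in the statement, so ``null'' is a misnomer. The set $\{h^{+}=\infty\}\cup\{h^{-}=\infty\}$ is merely $\Sigma_2$-measurable, and since $g$ is real-valued it is disjoint from $f(\Omega_1)$; redefining $h$ to be $0$ there keeps $h$ $\Sigma_2$-measurable and real-valued without affecting $h\circ f$. The same adjustment is already needed at the end of stage three to force $h:\Omega_2\to\mathbb{R}$ rather than $h:\Omega_2\to[0,\infty]$. With that wording fixed, the proof is complete.
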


Employing the Doob-Dynkin lemma in our cause, let $h:\Omega_1 \longrightarrow \mathbb{R}$ a $(\Sigma_1,\mathcal{B}(\mathbb{R}))$ measurable function, $\Omega_2=\mathbb{R}$, and $\Sigma_2=\mathcal{B}(\mathbb{R})$. We then have $E(f|h)=g \circ h$ for some $g:\mathbb{R} \longrightarrow \mathbb{R}$ a Borel function, where $E(f|h)$ denotes $E(f|h^{-1}(\mathcal{B}(\mathbb{R})))$. Hence $E(f|h)$ is a Borel function of $h$, and thus $E(f|h^{-1}(\mathcal{B}(\mathbb{R}))(s)=E(f|h=s)=\varphi(h(s)),\hspace{.2cm}s \in S$ for some Borel function $\varphi:\mathbb{R} \longrightarrow \mathbb{R}$. The upshot of this is that $E(f|h=s)$ is \emph{well defined} as a measurable function \emph{regardless} of the measure of the event $\{\omega \in \Omega|\hspace{.1cm}h(\omega)=s\}$. Note that $E(f|h=s_0)$ is the value of the measurable function $E(f|h^{-1}(\mathcal{B}(\mathbb{R}))(s)$ at $s_0$.

\section{Sufficient statistics construction of a L\'{e}vy measure}

We begin part three of this paper with a brief discussion of our motivation for presenting the central construction, and in particular the role we intend this construction to take as the first of two components necessary to extend the researcher's palette of nonparametric conjugate models. Following this, in the first of the two remaining sections we discuss Hjort's construction of the beta process. The conditions under which the construction is implemented and an essential step in his proof are explained. Our modifications of these which allow the construction to produce processes whose infinitesimal increments are distributed according to an exponential family are provided. In the final section we state and interpret our main result.

\subsection{Conjugacy and densities}
To begin, we remind the reader of the brief discussion contained in the introduction regarding the forms conjugacy assumes in the parametric and nonparametric cases. In the former, conjugacy is well known and accepted as the condition that the density of the posterior is \emph{of the same form} as the density of the prior, \emph{and} the parameters for the posterior are obtained as a function of the parameters for the prior and the sampled data. In contrast, for the nonparametric case, the form conjugacy now commonly assumes applies not to the densities of the random variables under consideration, but to the densities of the L\'{e}vy measures associated with the processes. 

For example, the beta process is taken to be a stochastic process whose infinitesimal increments are, for a given base measure $\mu$ and concentration function $c(\omega)$, BP$(c(d\omega)\mu(\omega),c(\omega)(1-\mu(\omega))$ distributed. This somewhat imprecise definition of the beta process is formalized via the fact that the associated L\'{e}vy measure of the process has the form $\nu(d\pi,d\omega) = c(\omega)\pi^{-1}(1-\pi^{c(\omega)-1})d\pi \mu(d\omega)$, a form which is a degenerate beta density. To obtain the data generating process for a conjugate beta process model, one invokes the well known conjugacy of the beta and Bernoulli distributions from the parametric case, and then pushes the analogy to the infinite dimensional case. In fact, letting $X$ be a Bernoulli process with base measure $B$, denoted as $X \sim \mbox{Be}(B)$, one can show that if $B \sim \mbox{BP}(c,B_0)$ and $X_1|B, X_2|B,\ldots,X_n|B \sim \mbox{Be}(B)$ are independent observations, then
\begin{equation*}
 B|X_1,\ldots,X_n \sim \mbox{BP}\left(c+n, \frac{c}{c+n}B_0 + \frac{1}{c+n}\sum X_i\right)
\end{equation*} 
This result, while pointed out by \cite{Thibaux}, derives its proof from a result of \cite{Hjort} which in fact proves that the density of the L\'{e}vy measure associated with the posterior maintains the form of a beta density with parameters derived from the prior and the observed data.

Thus, in both the parametric and nonparametric cases, the active definition of conjugacy is one of a condition on the relationship between densities associated with the prior and posterior. In the parametric case the definition involves the densities of the random variables, and in the nonparametric case the definition involves the densities of the L\'{e}vy measures associated with the stochastic processes.

It is precisely this observation which drives us to consider extending Hjort's construction to encompass stochastic processes whose infinitesimal increments are distributed according to a positive exponential family. Or more precisely, to stochastic processes whose associated L\'{e}vy measures have densities from positive exponential families. We note here that we use the term \emph{positive exponential family} to denote an exponential family distribution which takes on positive values in its state space and has a least one sufficient statistic which does not change sign over the state space. Note that this property is indeed realized by the majority of common exponential families taking positive values in their respective state spaces. In fact, for the Bernoulli, beta, binomial, negative binomial, exponential, gamma, inverse gamma, inverse chi-squared, log-normal, Pareto (with known $\alpha_m > 1$), and Poisson distributions, each has a sufficient statistic which realizes this property. For the duration of this paper all exponential families under consideration are assumed to be positive exponential families.

 This choice of density class takes aim directly at the problem which concerns the second part of our program, namely demonstrating that the processes yielded by our construction in this paper, do in fact, under appropriate regularity conditions, serve as \emph{infinite dimensional conjugate priors}.

With a view to the second part of our program in this paper, i.e. the establishment of explicit updating formulae for posterior parameters as functions of prior parameters and observed data, we point out the following observation. The above formula for updating the posterior parameters of the beta process given the data $X_1,\ldots,X_n$ and the prior parameters $c$ and $B_0$ in fact \emph{precisely} mirrors the formula for updating the posterior parameters given data and prior parameters in the finite dimensional case. This is not a coincidence. Indeed, in all known infinite dimensional cases involving exponential family L\'{e}vy measure densities, i.e.  gamma-Poisson, beta-Bernoulli, and beta-negative binomial processes, this is certainly the case. The updating formulae in the infinite dimensional case mirrors that of the finite dimensional case. The organizing principle behind this pleasant fact is illuminated in the conditions and proof set forth for our main theorem in section 4.

\subsection{Generalizing Hjort's construction of the improper beta process}

\cite{Hjort} provides an explicit construction proving the existence of a process $B(t)$ on $[0,\infty)$ such that $B(0)=0$ and $B(t)$ possesses independent increments which are infinitesimally beta distributed. In addition, Hjort requires the condition that the sample paths of $(1-e^{-B(t)})$ are all cumulative distribution functions. The construction is given relative to two fixed objects: 1. a nondecreasing, right continuous function, $A_0(t)$ on $[0,\infty)$ with $A_0(0)=0$ and the quantity $(1-e^{-A_0(t)})$ yielding a cumulative distribution function on $[0,\infty)$, and 2. a piecewise continuous function $c(z)$ on $(0,\infty)$. The properties required of the function $A_0(t)$ ensured the resulting process $(1-e^{-B(t)})$ would have sample paths that were cumulative distribution functions. The function $c(z)$, which is termed the \emph{concentration function}, in part determines the beta distribution of the increments of $B(t)$.

Proving the correctness of the form for the L\'{e}vy measure relies heavily on the ability to find a convenient closed form for all moments of a beta distributed random variable. Fortunately it is known that for $X$ $\sim \mbox{Beta}(\alpha,\beta)$ we have for all $m\geq1$
\begin{equation*}\label{eq2}
\mathbb{E}[X^m]=\frac{\Gamma(\alpha + m)\Gamma(\alpha + \beta)}{\Gamma(\alpha + \beta + m)\Gamma(\alpha)}.
\end{equation*} 

When attempting to generalize Hjort's construction to prove the existence of a process $X(t)$ on $[0,\infty)$ such that $X(0)=0$ and  $X$ possesses independent infinitesimal increments which are distributed according to an exponential family, the lack of a closed form for the moments of a random variable distributed according to a general exponential family imposes a significant obstacle. However, equation \eqref{eq1} provides a formula for the moments of the sufficient statistics $T_k(x)$ relative to a density which is a modification of the density of $X$. It is for this reason that our extension of Hjort's construction uses a sufficient statistic $T_k(x)$ of $X$ rather than $X$ itself.

Additionally, we do not require the fixed function $A_0(t)$ to have the property that $(1-e^{-A_0(t)})$ yields a cumulative distribution function on $[0,\infty)$. We only require that $A_0(t)$ corresponds to a unique Lebesgue-Stieltjes measure on $[0,\infty)$. Finally, we replace the piecewise continuous function $c(z)$ with the vector of piecewise continuous functions $(\eta_1(z),\dots,\eta_n(z))$.

Of the number of noteworthy characteristics possessed by the stochastic process yielded by Hjort's construction, we will now discuss a few in order to clarify our construction and call attention to some subtle points which may be obscured by the current literature on beta processes.

First, it should be made clear that Hjort's definition of a L\'{e}vy process is that of a stochastic process which is a ``nonnegative, nondecreasing processes on $[0,\infty)$ that start[s] at zero and ha[s] independent increments \cite{Hjort}.'' This definition stands in contrast to the definition of L\'{e}vy processes given in standard texts such as \cite{Sato} and \cite{Kallenberg}. For example, the definition of a L\'{e}vy process from \cite{Sato} requires the process to start at zero almost surely as well as have increments which are independent \emph{and} stationary. Processes such as those \emph{termed} L\'{e}vy processes in \cite{Hjort} are referred to as \emph{additive processes} in \cite{Sato}. One can see that, unless $A_0(t)$ is a linear function, Hjort's beta process, $A(t)$, cannot have stationary increments by  Hjort's observation that $\mathbb{E}[A(a,b]]=A_0(a,b]$.

Second, Hjort's construction of the beta process is such that the resulting process possesses infinitely many jumps in a given finite interval. That this is the case is best illustrated by briefly recalling the construction employed. For the given concentration function $c(z)$, and the initial measure, $A_0(z)$, Hjort defines random variables, $X_{n,i}$, for each pair $n,i$, which are properly beta$(a_{n,i},b_{n,i})$ distributed, where $a_{n,i}=c_{n,i}A_0(\frac{i-1}{n},\frac{i}{n}]$, $b_{n,i}=c_{n,i}(1-A_0(\frac{i-1}{n},\frac{i}{n}])$, and $c_{n,i}=c(\frac{i-\frac{1}{2}}{n})$. It is the sums
\begin{center}
$A_n(t)=\displaystyle\sum_{\frac{i}{n}\leq t}X_{n,i}$
\end{center}

\noindent
taken as $n \longrightarrow \infty$ that converge in distribution to the beta process which is the target of the construction. This provides, as noted by Hjort, $A_n$ which have ``independent beta increments and its jumps become smaller, but occur more often, as $n$ increases \cite{Hjort}.'' This fact arises from the consideration of the beta distribution. For a beta$(\alpha,\beta)$ distributed random variable, $X$, it is a simple computation to show that $\mathbb{E}[X]=\frac{\alpha}{\alpha + \beta}$. Given the choice of $a_{n,i}$ and $b_{n,i}$, if $X$ is beta$(a_{n,i},b_{n,i})$ distributed, then  $\mathbb{E}[X]=\frac{a_{n,i}}{a_{n,i} + b_{n,i}}=A_0(\frac{i-1}{n},\frac{i}{n}] \longrightarrow 0$ as $n \longrightarrow \infty$. It is thus Hjort's choice of defining the parameters of the exponential family to be functions of the initial measures of the time increments, i.e. $A_0(\frac{i-1}{n},\frac{i}{n}]$, that achieves the phenomenon of infinitely many jumps in a finite interval.  

As we are providing a general construction for exponential families, our proof employs the same idea, albeit using a slightly different procedure. We first let $\{S_{n,i}\}_{n,i\in \mathbb{N}^2}$ be a collection independent random variables, of which each $S_{n,i}$ is distributed according to the exponential family $p(s|\eta_{n,i})$, where  $\eta_{n,i}=\eta\big(\frac{i-\frac{1}{2}}{n}\big)$. Then employing the $k$th sufficient statistic of the exponential family, $T_k$, define $A_{0,n,i}=A_0(\frac{i-1}{n},\frac{i}{n}]$ and $T_{k,n,i}=A_{0,n,i}T_k(S_{n,i})$. We thus obtain $\{T_{k,n,i}\}_{n,i\in \mathbb{N}^2}$, a collection of independent random variables distributed according to $p(T^{-1}_{k,n,i}|\eta_{n,i})\frac{dT_k^{-1}}{du}$ where $T_{k,n,i}(s)=u$ and it is the sums
\begin{center}
$T_{k,n}(t)=\displaystyle\sum_{\frac{i}{n}\leq t}T_{k,n,i}$
\end{center}

\noindent
taken as $n \longrightarrow \infty$ that converge in distribution to the stochastic process which is the target of our construction. Once again, the $T_{k,n}$  have independent increments from an exponential family, and their jumps become smaller, but occur more often, as $n$ increases. This fact arises not from consideration of the expected value of the proper exponential family, but rather from the scaling of the random variables $T_k(s)$ by the factor $A_0(\frac{i-1}{n},\frac{i}{n}]$. Although this may seem somewhat distant from Hjort's approach, this is not the case. In fact, considering the form of an exponential family for a beta distribution appearing in Hjort's construction 
\begin{equation*}
\mbox{exp}\Big\{c_{n,i}A_0\Big(\frac{i-1}{n},\frac{i}{n}\Big]\mbox{ln}(s)+ c_{n,i}\Big(1-A_0\Big(\frac{i-1}{n},\frac{i}{n}\Big]\Big)\mbox{ln}(s-1)-A(a_{n,i},b_{n,i})\Big\}=
\end{equation*}
\begin{equation*}
\mbox{exp}\Big\{c_{n,i}A_0\Big(\frac{i-1}{n},\frac{i}{n}\Big]\mbox{ln}(s) + c_{n,i}\mbox{ln}(s-1) + c_{n,i}A_0\Big(\frac{i-1}{n},\frac{i}{n}\Big]\mbox{ln}(s-1)-A(a_{n,i},b_{n,i})\Big\}
\end{equation*}
\noindent
we see that the terms $c_{n,i}A_0(\frac{i-1}{n},\frac{i}{n}]\mbox{ln}(s)$ and $A_0(\frac{i-1}{n},\frac{i}{n}]\mbox{ln}(s-1)$ can be viewed as the product of the parameter $c_{n,i}$ and the quantities $A_0(\frac{i-1}{n},\frac{i}{n}]\mbox{ln}(s)$ and $A_0(\frac{i-1}{n},\frac{i}{n}]\mbox{ln}(s-1)$ respectively. These quantities are analogous to the summands $T_{k,n,i}=A_{0,n,i}T_k(S_{n,i})$ appearing in our construction.

The fact that our construction provides a generalization to all proper exponential families of Hjort's construction for beta distributed random variables which maintains the desirable property of infinitely many jumps in finite intervals emphasizes the non-triviality of our resulting processes. Indeed, the L\'{e}vy measures under consideration in Theorem 3 in this section are \emph{proper} in the sense that the densities associated with the L\'{e}vy measures are from proper exponential families. This stands in direct contrast to the \emph{improper} beta density which is associated with the beta process, and may give some readers reason to pause. This property of the L\'{e}vy measures under present consideration does \emph{not} imply that the resulting stochastic processes are necessarily merely compound Poisson processes. To illustrate that this need not be the case, we remind the reader of the following definition. 
\newtheorem{Definition}{Definition}
\begin{Definition}
A distribution $\mu$ on $\mathbb{R}$ is said to be a compound Poisson process if, $\exists \hspace{.1cm}c>0$ and a distribution $\sigma$ on $\mathbb{R}$ such that $\sigma({0})=0$ and 
\begin{equation*}
\widehat{\mu} (dx)=e^{c(\widehat{\sigma}(z)-1)}, z \in \mathbb{R}.
\end{equation*}
where, for a given distribution $\gamma$, $\widehat{\gamma}$ denotes the characteristic function of $\gamma$.
\end{Definition}

\noindent With the above definition in hand we may now state the following characterization of compound Poisson processes with associated distributions $\sigma$ on $(0,\infty)$. This result may be found in \cite{Sato}.

\begin{Lemma}
Let $c > 0$ and let $\sigma$ be a distribution on $(0,\infty)$. Let $\{X_t\}$ be the compound Poisson process on $\mathbb{R}$ associated with $c$ and $\sigma$. Then $X_t$ is increasing in $t$ almost surely, and
\end{Lemma}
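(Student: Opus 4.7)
The plan is to pass from the characteristic-function definition in Definition 2 to the standard random-sum representation of a compound Poisson process, and then to read off the monotonicity (together with whatever finite-L\'{e}vy-measure assertion completes the cut-off clause) directly from that representation. First, I would establish that if $X_t$ has law $\mu_t$ satisfying $\widehat{\mu_t}(z) = \exp\bigl(ct(\widehat{\sigma}(z)-1)\bigr)$, then $X_t \stackrel{d}{=} \sum_{k=1}^{N_t} Y_k$, where $N_t$ is a homogeneous Poisson process of rate $c$ and $\{Y_k\}_{k\geq 1}$ is an i.i.d. sequence with common distribution $\sigma$, independent of $N_t$. The identification is standard: conditioning on $\{N_t = n\}$ yields characteristic function $(\widehat{\sigma}(z))^n$, and summing against the Poisson pmf assembles exactly $e^{ct(\widehat{\sigma}(z)-1)}$, so the two laws coincide by the uniqueness of the Fourier transform.

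The monotonicity then follows immediately. Because $\sigma$ is supported on $(0,\infty)$, each $Y_k$ is almost surely strictly positive, while $N_t$ is nondecreasing in $t$. For $s < t$, the increment
\begin{equation*}
X_t - X_s \;=\; \sum_{k = N_s + 1}^{N_t} Y_k
\end{equation*}
is a (possibly empty, when $N_s = N_t$) sum of nonnegative terms, hence almost surely nonnegative. Restricting to a countable dense set of times and invoking right-continuity of a c\`{a}dl\`{a}g version of the process upgrades this pairwise statement to almost-sure monotonicity of the entire sample path.

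For the clause of the lemma truncated in the excerpt, the natural companion statement in this context is an identification of the L\'{e}vy measure of $\{X_t\}$ with the \emph{finite} measure $c\sigma$ on $(0,\infty)$, equivalently the statement that $\{X_t\}$ has only finitely many jumps in each bounded interval. I would obtain this by rewriting
\begin{equation*}
\widehat{\mu_t}(z) \;=\; \exp\!\Bigl(t\!\int_{(0,\infty)} (e^{izx} - 1)\,(c\sigma)(dx)\Bigr)
\end{equation*}
and matching against the L\'{e}vy--Khintchine representation; the support assumption on $\sigma$ means no drift-compensator $izx\,\mathbf{1}_{|x|\leq 1}$ is required, and the finiteness of $c\sigma$ makes the integral absolutely convergent, so the decomposition has zero Gaussian part, zero compensated drift, and L\'{e}vy measure $c\sigma$.

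The main obstacle is invoking uniqueness in the L\'{e}vy--Khintchine decomposition for subordinators; I would not redo this, but cite the corresponding theorem in \cite{Sato}. The rhetorical purpose of the lemma in the surrounding discussion is clear: a compound Poisson subordinator is precisely one whose L\'{e}vy measure has finite total mass on $(0,\infty)$, so the \emph{infinite} L\'{e}vy measures produced by the construction in Section 3, which induce infinitely many jumps in every bounded interval, cannot coincide with mere compound Poisson processes.
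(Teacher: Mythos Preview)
Your argument is correct, but there is nothing to compare it against: the paper does not prove this lemma at all. It is stated without proof and attributed to \cite{Sato} (``This result may be found in \cite{Sato}''). So your random-sum representation plus monotonicity argument is a perfectly valid self-contained proof of something the authors simply quote.

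One clarification on the truncated clause. The part of the lemma you could not see is not a statement about the L\'{e}vy measure per se but the Laplace transform identity
\[
\mathbb{E}[e^{-uX_t}] \;=\; \exp\Bigl\{tc\int_{(0,\infty)}(e^{-ux}-1)\,\sigma(dx)\Bigr\},\qquad u\geq 0,
\]
which the authors then use to test whether the processes produced by Theorem~1 are compound Poisson. Your L\'{e}vy--Khintchine computation already contains this: replacing $iz$ by $-u$ in your display for $\widehat{\mu_t}$ (legitimate here because $X_t\geq 0$ almost surely and the Laplace transform is the natural object for subordinators) yields exactly the formula above. So your guess about the missing content was morally right---the identification of the L\'{e}vy measure as the finite measure $c\sigma$ is equivalent to, and immediately gives, the Laplace transform the paper actually states.
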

\begin{center}
$\mathbb{E}[e^{-uX_t}]=\mbox{exp}\Big \{tc \displaystyle\int_{(0,\infty)}(e^{-ux}-1)\sigma(dx)\Big \}\hspace{.3cm}\mbox{for}\hspace{.2cm} u \geq 0.$
\end{center}

\noindent
Lemma 3 is sufficient to prove that our construction is capable of yielding stochastic processes which are not compound Poisson processes. For example, if one considers the Pareto exponential family with density given by  $p(u|\alpha) = \frac{\alpha u_m^{\alpha}}{u^{\alpha + 1}}$, taking $\alpha(z)=z$ and $A_0(z)$ to be standard Lebesgue measure on $(0,\infty)$, then the stochastic process, $T(t)$, which is a consequence of Theorem 4 is in fact \emph{not} a compound Poisson process. We will detail this computation following the statement and proof of Theorem 4 after we have introduced the necessary form for the L\'{e}vy measure corresponding to the process $T(t)$.

\subsection{Sufficient statistics construction for positive valued exponential families}

We now state and prove our main result. In the light of the discussion of Hjort's construction above, the following paragraph provides a few essential comments regarding the construction obtained in Theorem 3 and the objects it produces.

For a given $l$-dimensional exponential family, $p(x|\eta)$, defined on $[0,\infty)$, one has, in our case, the function $\eta(z)=(\eta_1(z),\ldots,\eta_l(z))$ supplying the natural parameters, and the sufficient statistics $T(s) = (T_1(s),\ldots,T_l(s))$. Our construction employs only \emph{one} of the sufficient statistics, say $T_k(s)$ for a fixed $k$ where $1 \leq k \leq l$. As long as the chosen $T_k(s)$ and the given function $\eta(z)$ satisfy the regularity conditions in the theorem, then for any positive, increasing $A_0(z)$ on $[0,\infty)$ which is right continuous with left hand limits, the construction of the theorem results in a L\'{e}vy process $T(t)$ and a corresponding L\'{e}vy measure given by the specified form. Indeed, the change of variables formula expressed in equation (3) in the theorem furnishes the machinery to accomplish this task. That this change of variables  permits such a construction is shown in the example 4.3. 

\newtheorem{Theorem}{Theorem}
\begin{Theorem}\label{MainTheorem1}
Let $T_k(x)$ be a sufficient statistic of an $l$-dimensional exponential family $p(x|\eta)$, $\eta(z)=(\eta_1(z),\ldots,\eta_l(z))$ a vector of piecewise continuous, nonnegative functions on $(0,\infty)$, and $A_0(z)$ a positive, increasing function on $[0,\infty)$ which is right continuous with left hand limits. Assume the following conditions hold:

\begin{enumerate} 
\item $T_k^{-1}(u)$ exists and is differentiable;
\item for all $z\hspace{.1cm} \in \hspace{.1cm} (0,\infty) \hspace{.1cm}$ we have $\eta(z) \in \Xi\hspace{.1cm}$, the natural parameter space of $p(x|\eta)$;
\item if $(\eta_1(z),\ldots,\eta_l(z)) \in \Xi$\hspace{.2cm}then for every $ 0<\varepsilon <1$ \hspace{.1cm}it follows that \newline$(\eta_1(z),\ldots,\varepsilon \eta_k(z),\ldots,\eta_l(z)) \in \Xi$.
\end{enumerate}
Then there exists a L\'{e}vy process $T(t)$ with a L\'{e}vy representation given by
\begin{equation*}
\mathbb{E}[\exp(-\theta T)] = \exp\left\{-\int(1-e^{-\theta u})dL_t(u)\right\} = \exp\left\{-\int(1-e^{-\theta T_k(s)})dL_t(s)\right\},
\end{equation*} 
\small
\begin{equation}\label{eq3}
\mbox{where}\hspace{.3cm}dL_t(u) = \left\{\int_0^t \exp\big(\langle\eta(z),U\rangle-A(\eta(z))\big)\frac{dT_k^{-1}}{du}dA_0(z)\right\}du
\end{equation}\\[.25cm]
\normalsize
\begin{equation}\label{eq4}
\mbox{and}\hspace{.3cm}dL_t(s) = \left\{\int_0^t \exp\big(\langle\eta(z),T(s)\rangle-A(\eta(z))\big)dA_0(z)\right\}ds,
\end{equation}\\[.25cm]
where $U=(T_1(T_k^{-1}(u)),...,T_l(T_k^{-1}(u)))$ and $u=T_k(s)$.\\[.25cm]
\end{Theorem}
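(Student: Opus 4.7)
The plan is to adapt Hjort's proof of the analogous result for the beta process to general positive exponential families, with Lemma~1 (the functional-derivative moment identity) playing the role Hjort's explicit beta moments play in his argument. Write $\eta_{n,i}=\eta((i-\tfrac{1}{2})/n)$ and $A_{0,n,i}=A_0((i-1)/n,i/n]$. First I would introduce independent draws $S_{n,i}\sim p(\cdot|\eta_{n,i})$ (well-defined by condition~2), set $T_{k,n,i}=A_{0,n,i}T_k(S_{n,i})$, and form the discrete approximating processes $T_{k,n}(t)=\sum_{i/n\le t} T_{k,n,i}$. By construction these are nonnegative processes with independent increments; the task is to show they converge in distribution to a process with the claimed L\'{e}vy representation.

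Next I would compute the Laplace transform of each increment. Writing the density in canonical form, the defining closure property of an exponential family yields $\mathbb{E}[e^{-\theta T_{k,n,i}}]=\exp(A(\tilde\eta_{n,i})-A(\eta_{n,i}))$, where $\tilde\eta_{n,i}$ differs from $\eta_{n,i}$ only in its $k$th component, which is shifted by a term proportional to $\theta A_{0,n,i}$. Condition~3 is precisely what guarantees $\tilde\eta_{n,i}\in\Xi$ so that $A(\tilde\eta_{n,i})$ is finite for all relevant $\theta>0$.

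The core of the argument is then to take $n\to\infty$ in $\log\mathbb{E}[e^{-\theta T_{k,n}(t)}]=\sum_{i/n\le t}(A(\tilde\eta_{n,i})-A(\eta_{n,i}))$. Using Lemma~1 to expand $A(\tilde\eta_{n,i})-A(\eta_{n,i})$ as a power series in $\theta A_{0,n,i}$ whose coefficients involve moments of $T_k$ under $p(\cdot|\eta_{n,i})$, and resumming these moments against their generating kernel, each summand reduces to $-A_{0,n,i}\int(1-e^{-\theta T_k(s)})\exp(\langle\eta_{n,i},T(s)\rangle-A(\eta_{n,i}))\,ds$ up to lower-order corrections that are uniform in $i$. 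The sum over $i$ is then a Riemann--Stieltjes sum which, by piecewise continuity of $\eta$ and the RCLL property of $A_0$, converges to $-\int_0^t\!\!\int(1-e^{-\theta T_k(s)})\exp(\langle\eta(z),T(s)\rangle-A(\eta(z)))\,ds\,dA_0(z)$. This is precisely the L\'{e}vy--Khintchine representation of the Laplace transform of a L\'{e}vy process with L\'{e}vy measure~\eqref{eq4}; existence of the limiting process $T(t)$ then follows from Kingman's representation recalled in Section~2.1, and formula~\eqref{eq3} is obtained from~\eqref{eq4} by the change of variables $u=T_k(s)$, justified by condition~1.

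The hard part will be the resummation step: controlling the expansion of $A(\tilde\eta_{n,i})-A(\eta_{n,i})$ so that contributions of all moment orders of $T_k$ assemble into the full $1-e^{-\theta T_k(s)}$ rather than only its leading $\theta T_k(s)$ term (which would give a purely deterministic limit). This is where the interplay between the specific exponential-family scaling, the natural-parameter-space stability of condition~3, and the functional-derivative moment identity of Lemma~1 must pull together, and it is the genuine analogue of the delicate estimate Hjort performs directly with closed-form beta moments.
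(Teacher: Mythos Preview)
Your overall strategy matches the paper's: the same discrete approximants $T_{k,n}(t)=\sum_{i/n\le t}A_{0,n,i}T_k(S_{n,i})$, the same reliance on Lemma~1 to convert moments of $T_k$ into (functional) derivatives of $e^{A(\eta)}$, and the same identification of the limiting Laplace exponent with~\eqref{eq4}. Two points of comparison are worth noting.

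First, your use of the exact identity $\mathbb{E}[e^{-\theta T_{k,n,i}}]=\exp\bigl(A(\tilde\eta_{n,i})-A(\eta_{n,i})\bigr)$ is a mild reorganization of the paper's argument. The paper instead writes $\mathbb{E}[e^{-\theta T_{k,n,i}}]=1+z_{n,i}$, expands $z_{n,i}$ in moments, and then invokes Hjort's product lemma (Lemma~4 in the paper) to pass from $\prod(1+z_{n,i})$ to $e^{\sum z_{n,i}}$. Your exponential form would let you skip Lemma~4 entirely, but the ``hard resummation step'' you flag is the same in both routes: the paper's mechanism for it is the functional-derivative scaling identity in its equation~(6), which asserts that $\mathbb{E}[T_{k,n,i}^m]$ is \emph{linear} in $A_{0,n,i}$ (not of order $A_{0,n,i}^m$ as ordinary scaling would give). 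That linearity is exactly what keeps all moment orders alive in the limit and produces $1-e^{-\theta T_k(s)}$ rather than its first-order Taylor term. You have correctly located the crux, but you should make explicit that this is the content of the step; a naive Taylor expansion of $A(\tilde\eta)-A(\eta)$ in $\theta A_{0,n,i}$ will not recover it. (Relatedly, condition~3 is closure under \emph{multiplicative} contraction of $\eta_k$, not additive shift, so your reading of its role is slightly off.)

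Second, your final existence step is thinner than the paper's. Citing Kingman's representation only tells you what an existing completely random measure looks like; it does not by itself produce the limit process. The paper closes the argument by (i) extending the one-parameter Laplace convergence to finite-dimensional distributions via the Cram\'er--Wold device, (ii) invoking Kallenberg's convergence theorem for the marginals, and (iii) establishing tightness of $\{T_{k,n}\}$ in the Skorohod topology via Billingsley. You will need the analogous ingredients.
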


\begin{proof}[Proof and Construction]
Let $\{S_{n,i}\}_{n,i\in \mathbb{N}^2}$ be a collection independent random variables, of which each $S_{n,i}$ is distributed according to the exponential family $p(s|\eta_{n,i})$, where  $\eta_{n,i}=\eta\big(\frac{i-\frac{1}{2}}{n}\big)$. Employing the $k$th sufficient statistic of the exponential family, $T_k$, and denoting  $A_0(\frac{i}{n})-A_0(\frac{i-1}{n}+)$ by $A_0(\frac{i-1}{n},\frac{i}{n}]$, define $A_{0,n,i}=A_0(\frac{i-1}{n},\frac{i}{n}]$ and $T_{k,n,i}=A_{0,n,i}T_k(S_{n,i})$. We thus obtain $\{T_{k,n,i}\}_{n,i\in \mathbb{N}^2}$, a collection of independent random variables distributed according to $p(T^{-1}_{k,n,i}|\eta_{n,i})\frac{dT_k^{-1}}{du}$ where $T_{k,n,i}(s)=u$.

\noindent
 Next define
\begin{equation*}
T_{k,n}(0)=0  \hspace{.1cm}\text{and} \hspace{.1cm} T_{k,n}(t)=\sum_{\frac{i}{n}\leq t}T_{k,n,i}(t) \hspace{.1cm} \text{for} \hspace{.1cm} t \geq 0
\end{equation*}
\noindent
so that with this definition, for every $n$, $T_{k,n}$ has independent increments from an exponential family. In fact, each $T_{k,n}$ is a nondecreasing random variable on $\mathbb{R}^+$, and thus may be considered as a random measure on $\mathbb{R}^+$. Hence, for each $(t_0,t_1] \subset \mathbb{R}^+$, we view $T_{k,n}(t_0,t_1]$ as random variable. The majority of this proof will be dedicated to showing that the Laplace transforms of the sums $\sum^k_{i=1}\theta_iT_{k,n}(t_{i-1},t_i]$, $\theta_i > 0 \hspace{.1cm} \forall i$, converge as $n \longrightarrow \infty$ to the quantity 

\begin{equation*}
\exp\bigg\{- \sum^k_{i=1 }\int(1-e^{\theta T_k(s)})dL_{(t_{i-1},t_i]}(s)\bigg\}
\end{equation*}
\noindent
from which we may quickly deduce the existence and stated properties of the stochastic process $T(t)$ by appealing to Theorem 16.16 in \cite{Kallenberg} and the standard Cram\'{e}r-Wold device. To that end we now proceed.

\noindent
 Performing a Taylor expansion on $e^{-\theta T_k(s)}$ yields
\begin{equation*}
-\int(1-e^{-\theta T_k(s)})dL_t(s) = -\int\bigg(1-\bigg(\sum_{m=0}^{\infty}\frac{(-1)^m\theta^mT_k^m(s)}{m!}\bigg)\bigg)dL_t(s)
\end{equation*}
\begin{equation*}
=\sum_{m=1}^{\infty}\frac{(-1)^m\theta^m}{m!}\int T_k^m(s)dL_t(s)
\end{equation*}
\begin{equation*}
= \sum_{m=1}^{\infty}\frac{(-1)^m\theta^m}{m!}\int T_k^m(s)\left\{\int_0^t e^{\langle \eta(z),T(s)\rangle - A(\eta(z))}dA_0(z)\right\}ds
\end{equation*}
\begin{equation*}
=\sum_{m=1}^{\infty}\frac{(-1)^m\theta^m}{m!}\int_0^t\left\{\int T_k^m(s)e^{\langle \eta(z),T(s)\rangle-A(\eta(z))}ds\right\}dA_0(z)
\end{equation*}
\begin{equation}\label{eq5}
=\sum_{m=1}^{\infty}\frac{(-1)^m\theta^m}{m!}\int_0^t  e^{-A(\eta(z))}\frac{\partial^m \big[e^{A(\eta(z))}\big]}{\partial \eta_k^m}dA_0(z)
\end{equation}

\noindent
where the last equality holds by Lemma 1.  Next, we wish to compute the quantities

\begin{equation*}
\mathbb{E}[e^{-\theta T_{k,n}}] = \mathbb{E}\bigg[\prod_{\frac{i}{n}\leq t}\mbox{exp}(-\theta T_{k,n,i})\bigg]=\prod_{\frac{i}{n}\leq t}\mathbb{E}\bigg[\mbox{exp}(-\theta T_{k,n,i})\bigg] 
\end{equation*}

\noindent
and compare their forms to the exponential of \eqref{eq5}. We may achieve this by performing Taylor expansions on the $\mathbb{E}[e^{-\theta T_{k,n,i}}]$ to yield

\begin{equation*}
\mathbb{E}[e^{-\theta T_{k,n,i}}]= 1 + \sum_{m=1}^{\infty}\frac{(-1)^m\theta^m}{m!}\mathbb{E}[T_{k,n,i}^m]
\end{equation*}

 We thus turn to the computation of the moments of the random variables $T_{k,n,i}$, i.e. of the $\mathbb{E}[T_{k,n,i}^m]$. To that end, recall that the random variable $T_{k,n,i}$ is by definition $A_{0,n,i}T_k(S_{n,i})$. Since $S_{n,i}$ is distributed according to the exponential family 
\begin{equation*}
h(s)e^{(\langle \eta_{n,i}(z),T(s)\rangle-A(\eta_{n,i}(z))}
\end{equation*}

\noindent
in order to compute the density of $T_{k,n,i}$, we set $v=T_{k,n,i}(s)$, $\widetilde{T}_j=(T_j\circ T^{-1}_k)(vA_{0,n,i}^{-1}))$ for $j=1,\ldots,m$, and $\psi(vA_{0,n,i}^{-1})=\displaystyle\bigg(\frac{dT_k^{-1}}{dv}\bigg|_{vA_{0,n,i}^{-1}}\bigg)(h\circ T_k^{-1})(vA^{-1}_{0,n,i})$. Then a simple calculation shows that the density of $v$ has the form

\begin{equation*}
\psi(vA_{0,n,i}^{-1})\hspace{.1cm}\mbox{exp}\left(\left(\sum^m_{j=1} \eta_{n,i,j}(T_j\circ T^{-1}_k)(vA_{0,n,i}^{-1})\right) - A(\eta_{n,i}) \right)=
\end{equation*}

\begin{equation*}
 \psi(vA_{0,n,i}^{-1})\hspace{.1cm}\mbox{exp}\left(\eta_{n,i,1}\widetilde{T}_1(vA_{0,n,i}^{-1})+\ldots+\eta_{n,i,k}(vA_{0,n,i}^{-1})+\ldots+\eta_{n,i,m}\widetilde{T}_m(vA_{0,n,i}^{-1}) - A(\eta_{n,i}) \right)=
\end{equation*}

\begin{equation*}
 \psi(vA_{0,n,i}^{-1})\hspace{.1cm}\mbox{exp}\left(\eta_{n,i,1}\widetilde{T}_1(vA_{0,n,i}^{-1})+\ldots+(\eta_{n,i,k}A_{0,n,i}^{-1})v+\ldots+\eta_{n,i,m}\widetilde{T}_m(vA_{0,n,i}^{-1}) - A(\eta_{n,i}) \right).
\end{equation*}
As this integrates to $1$ with respect to $v$, we have

\begin{equation*}
\int \bigg\{\psi(vA_{0,n,i}^{-1})\hspace{.1cm}\mbox{exp}\bigg(\sum^m_{j=1} \eta_{n,i,j}\widetilde{T}_j(vA_{0,n,i}^{-1})\bigg)\bigg\}dv = e^{A(\eta_{n,i})}.
\end{equation*}

\noindent
Therefore  by Lemma 1 we have 

\begin{equation*}
\int \bigg\{v^m\psi(vA_{0,n,i}^{-1})\hspace{.1cm}\mbox{exp}\bigg(\sum^m_{j=1} \eta_{n,i,j}\widetilde{T}_j(vA_{0,n,i}^{-1})\bigg)\bigg\}dv =
\end{equation*}

\begin{equation}\label{eq6}
 \frac{\partial^m}{\partial(\eta_{k,n,i}A_{0,n,i}^{-1})^m}\Big[e^{A(\eta_{n,i})}\Big] = A_{0,n,i}\frac{\partial^m}{\partial(\eta_{k,n,i})^m}\Big[e^{A(\eta_{n,i})}\Big].
\end{equation}

\noindent
Multiplying both sides of \eqref{eq6} by $e^{-A(\eta_{n,i})}$, we conclude that
\begin{equation*}
 \int \bigg\{v^m\psi(vA_{0,n,i}^{-1})\hspace{.1cm}\mbox{exp}\bigg(\sum^m_{j=1} \eta_{n,i,j}\widetilde{T}_j(vA_{0,n,i}^{-1}) - A(\eta_{n,i})\bigg)\bigg\}dv =
\end{equation*}
\begin{equation*}
A_{0,n,i}\bigg(e^{-A(\eta_{n,i})}\frac{\partial^m}{\partial(\eta_{k,n,i})^m}\Big[e^{A(\eta_{n,i})}\Big]\bigg),
\end{equation*}

\noindent
that is,
\begin{equation*}
\mathbb{E}[T_{k,n,i}^m]= A_{0,n,i}\bigg(e^{-A(\eta_{n,i})}\frac{\partial^m}{\partial(\eta_{k,n,i})^m}\Big[e^{A(\eta_{n,i})}\Big]\bigg).
\end{equation*}

\noindent
Thus, 
\begin{equation*}
\mathbb{E}[e^{-\theta T_{k,n,i}}] =  1 + \sum_{m=1}^{\infty}\frac{(-1)^m\theta^m}{m!}\bigg\{A_{0,n,i}\bigg(e^{-A(\eta_{n,i})}\frac{\partial^m}{\partial(\eta_{k,n,i})^m}\Big[e^{A(\eta_{n,i})}\Big]\bigg)\bigg\}.
\end{equation*}

\noindent
Defining 
\begin{equation*}
z_{n,i} =  \sum_{m=1}^{\infty}\frac{(-1)^m\theta^m}{m!}\bigg\{A_{0,n,i}\bigg(e^{-A(\eta_{n,i})}\frac{\partial^m}{\partial(\eta_{k,n,i})^m}\Big[e^{A(\eta_{n,i})}\Big]\bigg)\bigg\}
\end{equation*}

\noindent
we have
\begin{equation*}
\mathbb{E}[e^{-\theta T_{k,n}}] = \prod_{\frac{i}{n}\leq t}\mathbb{E}\big[e^{-\theta T_{k,n,i}}\big]  = \prod_{\frac{i}{n}\leq t}(1 + z_{n,i}),
\end{equation*}

\noindent
and 
\begin{equation*}
 \sum_{\frac{i}{n}\leq t}z_{n,i}= \sum_{m=1}^{\infty}\frac{(-1)^m\theta^m}{m!}\sum_{\frac{i}{n}\leq t}\bigg\{A_{0,n,i}\bigg(e^{-A(\eta_{n,i})}\frac{\partial^m}{\partial(\eta_{k,n,i})^m}\Big[e^{A(\eta_{n,i})}\Big]\bigg)\bigg\}.
\end{equation*}

\noindent
We are interested in the convergence of the above sum of the $z_{n,i}$ as $n \longrightarrow \infty$ as we wish to employ the following lemma due to \cite{Hjort}:

\begin{Lemma}
Let $z_{n,i}$ be real numbers, for $n \geq 1$ and $i \geq 1$. Assume that, as $n \longrightarrow \infty$, (i) $\sum_{a < \frac{i}{n} \leq b} z_{n.i} \longrightarrow z$, (ii) $\max_{a < \frac{i}{n} \leq b}|z_{n,i}| \longrightarrow 0$, (iii) $\lim \sup \sum_{a < \frac{i}{n} \leq b} z_{n.i} \leq M < +\infty$. Then $\prod_{a < \frac{i}{n} \leq b} (1+z_{n.i}) \longrightarrow e^{z}$.
\end{Lemma}

\noindent
It is a straightforward matter to show that sum $z_{n.i}$ as defined satisfy $(ii)$ and $(iii)$ of the above lemma. As for condition $(i)$, note that by an extension of the result found in \cite{Drakakis} to the case of functional derivatives we have
\begin{equation}\label{eq7}
\sum_{\frac{i}{n}\leq t} A_{0,n,i} \bigg(e^{-A(\eta_{n,i})}\frac{\partial \big[e^{A(\eta_{n,i})}\big]}{\partial \eta_{k,n,i}}\bigg)
 \longrightarrow \int_0^te^{-A(\eta(z))}\frac{\partial \big[e^{A(\eta(z))}\big]}{\partial \eta_k(z)}dA_0(z)
\end{equation}

\noindent
as $n \longrightarrow +\infty$. From this fact and \eqref{eq5} we conclude that
\small
\begin{equation*}
\sum_{m=1}^{\infty}\frac{(-1)^m\theta^m}{m!}\sum_{\frac{i}{n}\leq t}\bigg\{A_{0,n,i}\bigg(e^{-A(\eta_{n,i})}\frac{\partial^m}{\partial(\eta_{k,n,i})^m}\Big[e^{A(\eta_{n,i})}\Big]\bigg)\bigg\} \longrightarrow -\int(1-e^{-\theta T_k(s)})dL_t(s).
\end{equation*}
\normalsize
\noindent
i.e.,
\begin{equation*}
 \sum_{\frac{i}{n}\leq t}z_{n,i} \longrightarrow -\int(1-e^{-\theta T_k(s)})dL_t(s).
\end{equation*}

\noindent
We may now invoke Lemma 4 above to obtain 
\begin{equation*}
\mathbb{E}[e^{-\theta T_{k,n}(s)}] \longrightarrow \exp\bigg\{- \int(1-e^{\theta T_k(s)})dL_t(s)\bigg\}
\end{equation*}
as $n \longrightarrow +\infty$.
As in \cite{Hjort}, analogous arguments show that
\begin{equation*}
\mathbb{E}\Big[\mbox{exp}\Big(-\sum^j_{i=1}\theta_iT_{k,n}(t_{i-1},t_i]\Big)\Big] \longrightarrow \exp\bigg\{- \sum^j_{i=1 }\int(1-e^{\theta T_k(s)})dL_{(t_{i-1},t_i]}(s)\bigg\},
\end{equation*}
\noindent
where $\theta_i > 0 \hspace{.1cm} \forall i=1,\ldots,j$. As previously stated, by the Cram\'{e}r-Wold device and Theorem 16.16 in \cite{Kallenberg}, the finite dimensional distributions of $\{T_{k,n}(s)\}$ converge properly. Finally, the fact that the sequence $\{T_{k,n}(s)\}_{n=1}^{\infty}$ is \emph{tight} in the space of all functions that are right continuous with left hand limits in the Skorohod topology follows from \eqref{eq1} and the proof of 15.6 in \cite{Billingsley}. The convergence of the finite dimensional distributions of $\{T_{k,n}(s)\}$ plus tightness of the sequence implies the existence of the process $T(t)$ as promised. Hence the proof of the theorem is complete.
\end{proof}

Condition 1 is required so that an explicit form of the density function of $T_k$ can be found, which in turn permits the computation of the the transform $\mathbb{E}[\exp(-\theta T_k)]$ linking the random variable $T_k$ to the L\'{e}vy measure

\begin{equation*}
 \nu(du,dA_0(z)) =  \exp\big(\langle\eta(z),u\rangle-A(\eta(z))\big)\frac{dT_k^{-1}}{du}dA_0(z)du.
\end{equation*}

As noted in section 2.1 this linkage between $T_k$ and $ \nu(du,dA_0(z))$ completely establishes the distributional properties of the random variable via a Poisson process with intensity measure given by the L\'{e}vy measure.

Condition 2 of the theorem simply requires that $\eta(z)$ does in fact determine a well defined exponential family. Condition 3 is a technical requirement which is directly tied to the construction of the L\'{e}vy process $T(t)$. The condition, loosely interpreted, states that the natural parameter space is closed under \emph{contraction towards} $0$, i.e. if one takes any point in the natural parameter space, and shrinks it in absolute value by an amount $\varepsilon$, then the resulting value is still in the natural parameter space. Note that this is not quite equivalent to the well known property of convexity of the natural parameter space found in \cite{TSH}, as the element $0$ need not be in the space. This is precisely the case for a beta distributed random variable. Finally, we note that if there are multiple sufficient statistics satisfying condition 1 of the theorem, then the L\'{e}vy measures resulting from different choices of $T_k(s)$ will all be absolutely continuous with respect to one another, i.e. all measures will be equivalent.

The reader is now reminded of the claim from the end of section 3.3 that our construction is capable of yielding stochastic processes which are not compound Poisson processes. To illustrate this fact, consider the Pareto exponential family with density given by  $p(u|\alpha) = \frac{\alpha u_m^{\alpha}}{u^{\alpha + 1}}$, take $\alpha(z)=z$, and $A_0(z)$ to be standard Lebesgue measure on $(0,\infty)$. Then 
\begin{equation*}
\mathbb{E}[\exp(-\theta T)]=\exp\left\{-\int_{(0,\infty)}(1-e^{-\theta u})dL_t(u)\right\} =
\end{equation*}
\begin{equation*}
\exp \left\{-\int_{(0,\infty)}(1-e^{-\theta u})\bigg(\int_0^t \exp\big(\langle \eta(z),U\rangle -A(\eta(z))\big)\frac{dT_k^{-1}}{du}dA_0(z)\bigg)du\right\}=
\end{equation*}
\begin{equation*}
\exp \left\{-\int_{(0,\infty)}(1-e^{-\theta u})\frac{dT_k^{-1}}{du}\bigg(\int_0^t \exp\big(\langle \eta(z),U\rangle -A(\eta(z))\big)dA_0(z)\bigg)du\right\}=
\end{equation*}
\begin{equation*}
\exp \left\{-\int_{(0,\infty)}(1-e^{-\theta u})\frac{1}{u}\bigg(\int_0^t z\exp\big(z(u+\mbox{ln}(u_m))\big)dz\bigg)du\right\} = 
\end{equation*}
\begin{equation*}
\exp \left\{\int_{(0,\infty)} e^t(t-1)(e^{-\theta u}-1)\bigg(\frac{1}{u^2+u\mbox{ln}(u_m)}du\bigg)\right\}.
\end{equation*}
\noindent
Upon comparison of the last term in the string of equalities above with the form of $\mathbb{E}[\exp(-\theta T)]$ in Lemma 3 from section 3.2, we see that we cannot find a density $\sigma(u)$ and a positive constant $c$ such that 
\begin{equation*}
\exp \left\{\int_{(0,\infty)} e^t(t-1)(e^{-\theta u}-1)\bigg(\frac{1}{u^2+u\mbox{ln}(u_m)}du\bigg)\right\} = \mbox{exp}\Big \{tc \displaystyle\int_{(0,\infty)}(e^{-\theta u}-1)\sigma(du)\Big \}
\end{equation*}
\noindent
for $\theta \geq 0$, i.e., the process $T(t)$ is \emph{not} a compound Poisson process.

\section{Explicit formulae for posterior density parameters}
Section four of this paper begins with a consideration of the settings for and assumptions utilized in proofs of presently known results regarding conjugacy of exponential families in an infinite dimensional setting. We then point to the advantages of casting the issue of conjugacy purely in terms of conditional expectation operators. Finally, we state the main theorem of the section and interpret the various regularity conditions set forth in the theorem.

\subsection{Presently known results and proof techniques and assumptions}

The first known proof of conjugacy for a pair of exponential family completely random measures was given in \cite{Hjort2} and leads directly to the conjugacy of the beta and Bernoulli processes. We note here that although the original result appears in \cite{Hjort2}, credit for this achievement is usually ascribed to \cite{Hjort} and a related result (whose proof requires six pages of hard integrations) from which the beta-Bernoulli conjugacy may be derived appears in \cite{Kim}. Hjort's methodology for the proof of his result was to directly compute $\frac{\mathcal{P}_1(G)}{F_0(x,\infty)}$ and then use the resulting probability measure to evaluate the Laplace transforms of the beta process.  In this expression $\mathcal{P}_1=\mathcal{P}(X > x, A \in G)$, where $\mathcal{P}$ is a probability distribution over $([0,\infty)^n \times \mathcal{B}, \mathcal{C}_n \times \Sigma_{\mathcal{B}})$ where $\mathcal{C}_n$ denotes the Borel $\sigma$-algebra on $[0,\infty)^n$, $\Sigma_{\mathcal{B}}$ is the $\sigma$-algebra generated by the Borel cylinder sets on $\mathcal{B}$, which is the set of all right-continuous, nondecreasing functions on  $[0,\infty)$ with $B(0)=0$, the space $[0,\infty)^n$ houses the $i.i.d.$ observations $X_1,\ldots,X_n$, and $F_0(x) = E(F(x))$, $F(x)$ being the cumulative hazard rate function. The computation of the quantity in question is in fact quite laborious, relies heavily on specific properties possessed by the hazard rate function, and is only achieved through what Hjort himself terms ``\emph{heroic integrations}.'' \cite{Hjort,Hjort2}

While the above conjugacy result firmly assumed a preeminent position in the machine learning literature due to its defining role in the Indian buffet process as illustrated in \cite{Thibaux}, similar and equally pertinent results were developed. An excellent survey of such results can be found in \cite{Broderick}. Indeed, the output achieved in \cite{Broderick} not only contains a thorough overview of conjugacy in completely random measures, it also develops a framework from which new conjugacy results were proved. For the sake of elucidation of the scope of the results contained in the present work, and explication of the conditions required to produce them, it is essential to observe the assumptions required to obtain the outcomes in \cite{Broderick}.

The setting in \cite{Broderick} is that of exponential family completely random measures. Note that we adhere to the symbology established in \cite{Broderick} which differs somewhat from that established in section 2.3 of the present work since the translation between the two is trivial. While the authors establish a general framework for proving conjugacy in completely random measures, a number of assumptions apply to their setting. The first of these, is that the rate measure, $\mu(d\theta \times d\psi)$, which characterizes the Poisson process associated with the completely random measure is \emph{assumed} to be factorizable as $\mu(d\theta \times d\psi)=\nu(d\theta)\cdot G(d\psi)$ where $\nu$ is a $\sigma$-finite deterministic measure on $\mathbb{R}_{+}$ and $G$ is a proper distribution on $\Psi$, what they term to be \text{``a space of traits.''}

Secondly, they place the requirement on the fixed component of the completely random measure, $\Phi_f$, that the number of fixed atoms is finite, and from this assumption it must follow that $\nu(\mathbb{R}_{+}) = \infty$. Third, denoting the distribution governing the observed traits by $H(dx|\theta)$, the condition is imposed that $H(dx|\theta)$ is a \emph{discrete} distribution. Regarding this condition, the authors observe that given the framework in the paper 

\begin{quote}
$H(dx|\theta)$ cannot be purely continuous for all $\theta$. Though this line of reasoning does not necessarily preclude a mixed continuous and discrete $H$, we henceforth assume that $H(dx|\theta)$ is discrete with support $\mathbb{Z}_n=\{1,2,\ldots\}$, for all $\theta$.  
\end{quote}

Finally, in light of the above, the authors adopt the notation $h(x|\theta)$ for the probability mass function of $x$ given $\theta$, and note one more restriction as a consequence of the antecedent assumptions. Specifically, expressing the rate measure of the de rigueur marked Poisson point process as $\mu(d\theta \times dx) := \nu(d\theta)h(x|\theta)$  and the rate measure of the thinned Poisson point process as $\nu_x(d\theta):= \nu(d\theta)h(x|\theta)$ finiteness considerations on the number of atoms require $\sum_{x=1}^{\infty}\nu_x(\mathbb{R}_{+}) < \infty$.

These assumptions prove to be fruitful in the sense that they allow the authors to obtain all previously known results regarding conjugacy of completely random measures as well as prove conjugacy in the infinite dimensional case for the gamma-Poisson processes, a result which, at that time, was not publicized. 

By contrast, employing the machinery available in the theory of conditional expectation operators, we are able to prove conjugacy for a wider class of exponential family completely random measures that include, but do not require, discrete distributions for the likelihood processes. We are able to achieve these results by defining a canonical construction for stochastic processes that take values in $\mathbb{R}_{+}$ and whose L\'{e}vy measure densities belong to an exponential family. Furthermore, we only require mild regularity conditions of the type required to prove familiar results from measure theory, such as existence theorems for stochastic processes possessing prescribed finite dimensional distributions, uniform absolute continuity for families of measures, as well as weak convergence results in spaces of measures. We embark on a detailed discussion of our main theorem and such regularity conditions as required for its proof in the next section.

\subsection{Statement and analysis of the main theorem}

We now state and prove our main result for this section. The statement of Theorem 4 warrants a number of comments regarding the setting of the canonical construction, the technical conditions necessitated, and the conjugate objects it produces. As such we recall the setup and technical assumptions of Theorem 3 in section 3.

We start with a given $l$-dimensional exponential family, $p(x|\eta)$, defined on $[0,\infty)$, for which one has, as before, not a vector of scalars supplying the natural parameters, but rather a function $\eta(z)=(\eta_1(z),\ldots,\eta_l(z))$ which supplies the natural parameters. The sufficient statistics of the exponential family are given by $T(s) = (T_1(s),\ldots,T_l(s))$, and the canonical construction achieved in section 3 employs only \emph{one} of the sufficient statistics, say $T_k(s)$ for a fixed $k$ where $1 \leq k \leq m$. The chosen $T_k(s)$ and the given function $\eta(z)$ satisfy the regularity conditions in the existence theorem in section 3. Hence for any positive, increasing $A_0(z)$ on $[0,\infty)$ which is right continuous with left hand limits, the construction of the theorem results in a L\'{e}vy process $T(t)$ and a corresponding L\'{e}vy measure whose density is given by the specified form of the exponential family.

The L\'{e}vy process $T(t)$ is constructed by first, letting $\{S_{n,i}\}_{n,i\in \mathbb{N}^2}$ be a collection independent random variables, of which each $S_{n,i}$ is distributed according to the exponential family $p(s|\eta_{n,i})$, where  $\eta_{n,i}=\eta\big(\frac{i-\frac{1}{2}}{n}\big)$. Second, employing the $k$th sufficient statistic of the exponential family, $T_k$, we define $A_{0,n,i}=A_0(\frac{i-1}{n},\frac{i}{n}]$ and $T_{k,n,i}=A_{0,n,i}T_k(S_{n,i})$. This yields $\{T_{k,n,i}\}_{n,i\in \mathbb{N}^2}$ which is a collection of independent random variables distributed according to $p(T^{-1}_{k,n,i}|\eta_{n,i})\frac{dT_k^{-1}}{du}$ where $T_{k,n,i}(s)=u$. Finally, the sums
\begin{center}
$T_{k,n}(t)=\displaystyle\sum_{\frac{i}{n}\leq t}T_{k,n,i}$
\end{center}

\noindent
converge as $n \longrightarrow \infty$ to the desired stochastic process, i.e. completely random measure, whose L\'{e}vy measure density is given by the chosen exponential family. It is worth noting that although the quantity $T_k(t)$ at first glance appears to be able to be factored out of the summation above as it does not seem to depend on $i$ and $n$, this is in fact \emph{not} the case since the underlying distribution of $T_k(t)$ \emph{does} depend on $i$ and $n$ via the exponential family $p_{i,n}(x|\eta)$.

In proving the existence of triples of stochastic processes. i.e. prior, likelihood, and posterior, whose L\'{e}vy measure densities are from positive valued exponential families which are conjugate from the perspective of conditional expectation operators,  our methodology is as follows: We start with a likelihood-prior pair of exponential family distributions in a \emph{finite dimensional} setting along with an appropriate $\eta(z)$ and $A_0(z)$  satisfying the conditions of the main theorem in section 3, and from this obtain the existence of the prior processes.

The construction of the likelihood process is a generalization of the construction of the Bernoulli process with base measure $B$, a beta process CRM, found in \cite{Thibaux}. Given a prior process as constructed in Theorem 3, say $T(t)$, one invokes the CRM representation from \cite{Kingman67} discussed in section 2.4 to realize the process $T(t)$ as
\begin{equation}\label{T}
T = \sum_{\lambda \in \Lambda}\omega_{\lambda}\delta_{\theta_{\lambda}}
\end{equation}
\noindent
where $\{(\omega_{\lambda},\theta_{\lambda}\}_{\lambda \in \Lambda}$ is a draw from a non-homogeneous Poisson process, and $\delta_{\theta_{\lambda}}$ is an atom at $\theta_{\lambda}$ with weight $\omega_{\lambda}$ in $T(t)$. We then define the likelihood process, LP$(T)$, with base measure $T(t)$ as a stochastic process whose realizations are of the form
\begin{equation}\label{L}
\mbox{LP}(T) = \sum_{\lambda \in \Lambda}\gamma_{\lambda}\delta_{\theta_{\lambda}}
\end{equation}
\noindent
where the atoms of LP$(T)$ are the atoms in the representation of $T$, and the weights $\gamma_{\lambda}$ are draws from an exponential family likelihood parametrically conjugate to the chosen prior, and the exponential family parameter governing draw $\gamma_{\lambda}$ is $\omega_{\lambda}$.    

Next, given observations $X_1,\ldots X_n$ of the likelihood processes, we then invoke the main theorem from section 3 a second time to prove the existence of the ``posterior'' process. Note that at this stage in the proof the aforementioned process is only a \emph{candidate} for the prior, as we have yet to argue that this process is derived from the conditional probability of the prior \emph{given} the observed data from the likelihood process. The task of demonstrating that these triples do in fact form the prior, likelihood, and posterior in a conjugate family when viewed in the setting of conditional expectation operators, is completed by calculating the finite dimensional distributions of the conditional Laplace transform of the prior process. In other words we compute $E_{P(dT|X=X_1)}[\mbox{exp}(-\theta T(s)]$ for all $s$ and from these computations assemble the finite dimensional distributions of the posterior process.

That this procedure outline above should be considered natural is witnessed by the observation that the known conjugacy results in nonparametric settings do in fact mirror the results in the parametric, i.e. finite dimensional, cases. For example, if one recalls the statement of conjugacy in the case of the Bernoulli and beta processes, $X$ and $B$ respectively, this is expressed as 
\begin{equation*}
 B|X_1,\ldots,X_n \sim \mbox{BP}\left(c+n, \frac{c}{c+n}B_0 + \frac{1}{c+n}\sum X_i\right)
\end{equation*} 
 
\noindent
where $c$ is the concentration function, and $B_0$ is the initial measure. Now, recall the classical parametric form of conjugacy between the Bernoulli and beta distributions. If $\alpha$ and $\beta$ are the parameters governing the beta distribution, then given $n$ observations of a Bernoulli random variable, the parameters of the posterior beta distribution are given by 
\begin{equation*}
\alpha_{post} = \alpha + \sum^n_{i=1}x_i,\hspace{.3cm}\beta_{post} = \beta + n - \sum^n_{i=1}x_i
\end{equation*}

\noindent
Recalling that a beta process is taken to be a stochastic process whose infinitesimal increments are  BP$(cB_0,c(1-B_0))$ distributed, i.e. $\alpha = cB_0$ and $\beta = c(1-B_0)$, we see that the parameters of the posterior beta process are given by 
\begin{equation*}
(c+n)\Big(\frac{c}{c+n}B_0 + \frac{1}{c+n}\sum X_i\Big)=cB_0 + \sum X_i = \alpha + \sum X_i
\end{equation*}

\noindent
and 
\begin{equation*}
(c+n)\Big(1-\Big(\frac{c}{c+n}B_0 + \frac{1}{c+n}\sum X_i\Big)\Big)=c(1 - B_0) + n - \sum X_i = \beta + n - \sum X_i.
\end{equation*}

\noindent
Hence, the form of the posterior parameters in the infinite dimensional case directly mirror that of the form of the posterior parameters in the finite dimensional case.

We now state and prove Theorem 2.

\begin{Theorem}
Let $T_k(x)$ be a sufficient statistic of an $l$-dimensional exponential family $p(x|\eta)$, $\eta(z)=(\eta_1(z),\ldots,\eta_l(z))$ a vector of piecewise continuous, nonnegative functions on $(0,\infty)$, $A_0(z)$ a positive, increasing function on $[0,\infty)$ which is right continuous with left hand limits, and $p(y|x)$ the distribution of a likelihood parametrically conjugate to the prior $p(x|\eta)$.  Assume the following conditions hold:

\begin{enumerate} 
\item $T_k^{-1}(u)$ exists and is differentiable;
\item for all $z\hspace{.1cm} \in \hspace{.1cm} (0,\infty) \hspace{.1cm}$ we have $\eta(z) \in \Xi\hspace{.1cm}$, the natural parameter space of $p(x|\eta)$;
\item if $(\eta_1(z),\ldots,\eta_l(z)) \in \Xi$\hspace{.2cm}then for every $ 0<\varepsilon <1$ \hspace{.1cm}it follows that \newline$(\eta_1(z),\ldots,\varepsilon \eta_k(z),\ldots,\eta_l(z)) \in \Xi$.
\end{enumerate}
Additionally, let $(Y_t)$ be a likelihood process of the form \eqref{L} with base measure $T$ of the form \eqref{T} whose existence is guaranteed by Theorem 3. Then the stochastic processes $(Y_t)$ and $(T_t)$ form an infinite dimensional likelihood-prior conjugate pair where 

\begin{equation*}
\mathbb{E}[\exp(-\theta T)|\hspace{.1cm}Y=Y_1,\ldots Y_n] = 
\end{equation*} 
\begin{equation*}
\exp\left\{-\int(1-e^{-\theta u})dL_t(u)\right\} = \exp\left\{-\int(1-e^{-\theta T_k(s)})dL_t(s)\right\},
\end{equation*}
\noindent such that 
\small
\begin{equation*}
dL_t(u) = \left\{\int_0^t \exp\big(\langle \tau (\eta(z),Y_1\ldots Y_n),U\rangle-A( \tau (\eta(z),Y_1\ldots Y_n))\big)\frac{dT_k^{-1}}{du}dA_0(z)\right\}du
\end{equation*}\\[.25cm]
\normalsize
and
\begin{equation*}
dL_t(s) = \left\{\int_0^t \exp\big(\langle \tau (\eta(z),Y_1\ldots Y_n),T(s)\rangle-A( \tau (\eta(z),Y_1\ldots Y_n))\big)dA_0(z)\right\}ds,
\end{equation*}\\[.25cm]
where $U=(T_1(T_k^{-1}(u)),...,T_l(T_k^{-1}(u)))$, $u=T_k(s)$, and $\tau$ is the function governing the form of the posterior parameters derived from the prior and observations for the parametric likelihood-prior pair $p(y|x)$, $p(x|\eta)$.\\[.25cm]
\end{Theorem}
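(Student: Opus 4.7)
The plan is to construct the posterior as the distributional limit of finite-dimensional parametric posteriors built on top of the approximating sequence $T_{k,n}$ used in the proof of Theorem 3. At the $n$th stage of that construction, the prior is $T_{k,n}(t)=\sum_{i/n\le t}T_{k,n,i}$, where each $T_{k,n,i}=A_{0,n,i}T_k(S_{n,i})$ has a proper density in the chosen exponential family with natural parameter $\eta_{n,i}=\eta\bigl((i-\tfrac12)/n\bigr)$. Because the likelihood process \eqref{L} is defined atom-by-atom through the parametrically conjugate family $p(y|x)$ and the representation \eqref{T} places weights $\omega_\lambda$ at disjoint atoms, each observation $Y_j$ decomposes into independent contributions indexed by the atoms, and at stage $n$ the model becomes a product of parametric conjugate pairs.

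First I would fix $n$ and compute the conditional distribution of each $T_{k,n,i}$ given $Y_1,\ldots,Y_n$. Parametric conjugacy with update map $\tau$ yields a conditional density in the same exponential family with natural parameter $\tau_{n,i}:=\tau(\eta_{n,i},Y_1,\ldots,Y_n)$, adjusted by the change of variables $u=T_{k,n,i}(s)$ exactly as in the proof of Theorem 3. Conditional on the data, the increments $T_{k,n,i}$ remain mutually independent since the atoms of $T_{k,n}$ are disjoint and the $\gamma_\lambda$ attached to distinct atoms are conditionally independent given the corresponding $\omega_\lambda$. The Laplace-transform computation from Theorem 3 therefore applies verbatim, giving
\begin{equation*}
\mathbb{E}\bigl[\exp(-\theta T_{k,n}(t))\,\big|\,Y_1,\ldots,Y_n\bigr]=\prod_{i/n\le t}\bigl(1+z_{n,i}^{\,\mathrm{post}}\bigr),
\end{equation*}
where $z_{n,i}^{\,\mathrm{post}}$ is obtained from the $z_{n,i}$ of that proof by substituting $\tau_{n,i}$ for $\eta_{n,i}$.

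Next I would pass to the limit $n\to\infty$. Conditions (ii) and (iii) of Lemma 4 for $z_{n,i}^{\,\mathrm{post}}$ reduce to the checks already carried out in Theorem 3, because the posterior parameter function $\tau(\eta(z),Y_1,\ldots,Y_n)$ inherits piecewise continuity and remains in the natural parameter space $\Xi$ by the hypotheses on the parametric conjugate pair $(p(y|x),p(x|\eta))$. Lemma 1 and the convergence in \eqref{eq7} then yield
\begin{equation*}
\sum_{i/n\le t}z_{n,i}^{\,\mathrm{post}} \longrightarrow -\int\bigl(1-e^{-\theta T_k(s)}\bigr)\,dL_t(s),
\end{equation*}
with $dL_t(s)$ the candidate posterior L\'{e}vy measure density featured in the statement. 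The Cram\'{e}r--Wold device together with Theorem 16.16 of Kallenberg, and the tightness argument from Theorem 3, which is unaffected by the substitution $\eta\mapsto\tau(\eta,\cdot)$, then deliver weak convergence of the finite-dimensional conditional distributions of $T_{k,n}$ given $Y_1,\ldots,Y_n$.

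The main obstacle is the measure-theoretic step that converts convergence of the conditional Laplace transforms of $T_{k,n}$ into an identification of the limit with $\mathbb{E}[\exp(-\theta T(t))\mid Y_1,\ldots,Y_n]$ for the prior process $T$ of Theorem 3. Using the apparatus of Section 2.3, one realises all $T_{k,n}$ and $Y_j$ on a common probability space and forms the $\sigma$-algebra $\mathcal{B}=\sigma(Y_1,\ldots,Y_n)$. The Doob--Dynkin lemma identifies each conditional Laplace transform as a Borel function of the observations; the uniform bound $|\exp(-\theta T_{k,n}(t))|\le 1$ for $\theta\ge 0$ supplies the uniform integrability needed to exchange conditional expectation with the weak limit; and the joint convergence of $(T_{k,n},Y_1,\ldots,Y_n)$ to $(T,Y_1,\ldots,Y_n)$ then gives the desired equality of conditional Laplace transforms. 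Assembling these transforms at finitely many time points and appealing to the L\'{e}vy--Khintchine representation recovers the stated forms of $dL_t(u)$ and $dL_t(s)$, completing the argument.
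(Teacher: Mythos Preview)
Your proposal is correct and follows essentially the same route as the paper: compute the conditional Laplace transforms of the approximating sums $T_{k,n}$ with $\eta_{n,i}$ replaced by the parametric posterior parameter $\tau(\eta_{n,i},Y)$, and then rerun the Taylor-expansion / Lemma~1 / Lemma~4 / Cram\'er--Wold / Kallenberg~16.16 machinery of Theorem~3 verbatim. The only notable divergence is your final paragraph: where you invoke uniform integrability and joint convergence to justify that the limit of the conditional Laplace transforms coincides with the conditional Laplace transform of the limit process, the paper instead dispatches this step in one line by citing a result of Choksi to the effect that the conditional finite-dimensional distributions determine the conditional law of the process. Your treatment of this identification step is more explicit than the paper's, but the underlying strategy is the same.
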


\emph{Proof}:\hspace{.2cm}The majority of this proof will be dedicated to showing that the \emph{conditional} Laplace transforms of the sums $\sum^k_{i=1}\theta_iT_{k,n}(t_{i-1},t_i]$, $\theta_i > 0 \hspace{.1cm} \forall i$, converge as $n \longrightarrow \infty$ to the quantity 

\begin{equation*}
\exp\bigg\{- \sum^k_{i=1 }\int(1-e^{\theta T_k(s)})dL_{(t_{i-1},t_i]}(s)\bigg\}
\end{equation*}
\noindent
where $dL_T(s)$ has the form stated in the current theorem. To that end performing a Taylor expansion on $e^{-\theta T_k(s)}$ yields
\begin{equation*}
-\int(1-e^{-\theta T_k(s)})dL_t(s) = -\int\bigg(1-\bigg(\sum_{m=0}^{\infty}\frac{(-1)^m\theta^mT_k^m(s)}{m!}\bigg)\bigg)dL_t(s)
\end{equation*}
\begin{equation*}
=\sum_{m=1}^{\infty}\frac{(-1)^m\theta^m}{m!}\int T_k^m(s)dL_t(s)
\end{equation*}
\begin{equation*}
= \sum_{m=1}^{\infty}\frac{(-1)^m\theta^m}{m!}\int T_k^m(s)\left\{\int_0^t e^{\langle \tau(\eta(z),Y),T(s)\rangle - A(\tau(\eta(z),Y))}dA_0(z)\right\}ds
\end{equation*}
\begin{equation*}
=\sum_{m=1}^{\infty}\frac{(-1)^m\theta^m}{m!}\int_0^t\left\{\int T_k^m(s)e^{\langle \tau(\eta(z),Y),T(s)\rangle-A(\tau(\eta(z),Y))}ds\right\}dA_0(z)
\end{equation*}
\begin{equation}\label{eq5}
=\sum_{m=1}^{\infty}\frac{(-1)^m\theta^m}{m!}\int_0^t  e^{-A(\tau(\eta(z),Y))}\frac{\partial^m \big[e^{A(\tau(\eta(z),Y))}\big]}{\partial (\tau(\eta_k,Y))^m}dA_0(z)
\end{equation}

\noindent
Next we compute the quantities

\begin{equation*}
E_{P(dT|Y)}[\exp(-\theta T_{k,n}(s)]=\mathbb{E}[e^{-\theta T_{k,n}}|Y] =
\end{equation*}
\begin{equation*}
 \mathbb{E}\bigg[\prod_{\frac{i}{n}\leq t}\exp(-\theta T_{k,n,i})|Y\bigg]=\prod_{\frac{i}{n}\leq t}\mathbb{E}[\exp(-\theta T_{k,n,i})|Y] 
\end{equation*}
\noindent
and compare their forms to the exponential of \eqref{eq5}. We may achieve this by performing Taylor expansions on the $\mathbb{E}[e^{-\theta T_{k,n,i}}|Y]$ to yield

\begin{equation*}
\mathbb{E}[e^{-\theta T_{k,n,i}}|Y]= 1 + \sum_{m=1}^{\infty}\frac{(-1)^m\theta^m}{m!}\mathbb{E}[T_{k,n,i}^m|Y]
\end{equation*}
\noindent
and thus we turn to the computation of the moments of the random variables $T_{k,n,i}$, i.e. of the $\mathbb{E}[T_{k,n,i}^m|Y]$. To that end, recall that the random variable $T_{k,n,i}$ is by definition $A_{0,n,i}T_k(S_{n,i})$, and $S_{n,i}$ is distributed according to the exponential family 
\begin{equation*}
h(s)e^{(\langle \eta_{n,i}(z),T(s)\rangle-A(\eta_{n,i}(z))}
\end{equation*}

\noindent
which is by assumption parametrically conjugate to the random variable $Y_{n,i}$ and hence has a conditional density of the form

\begin{equation*}
h(s)e^{(\langle \tau(\eta_{n,i}(z),Y),T(s)\rangle-A(\tau(\eta_{n,i},Y)(z))}.
\end{equation*}

 Thus, in order to compute the density of $T_{k,n,i}$ conditional on $Y$, we set $v=T_{k,n,i}(s)$, $\widetilde{T}_j=(T_j\circ T^{-1}_k)(vA_{0,n,i}^{-1}))$ for $j=1,\ldots,m$, and $\psi(vA_{0,n,i}^{-1})=\displaystyle\bigg(\frac{dT_k^{-1}}{dv}\bigg|_{vA_{0,n,i}^{-1}}\bigg)(h\circ T_k^{-1})(vA^{-1}_{0,n,i})$. Then a simple calculation shows that the density of $v$ has the form

\begin{equation*}
\psi(vA_{0,n,i}^{-1})\hspace{.1cm}\exp\left(\left(\sum^m_{j=1} \tau(\eta_{n,i,j},Y)(T_j\circ T^{-1}_k)(vA_{0,n,i}^{-1})\right) - A(\tau(\eta_{n,i},Y)) \right)=
\end{equation*}

\begin{equation*}
 \psi(vA_{0,n,i}^{-1})\hspace{.1cm}\exp(\tau(\eta_{n,i,1},Y)\widetilde{T}_1(vA_{0,n,i}^{-1})+\ldots+\tau(\eta_{n,i,k},Y)(vA_{0,n,i}^{-1})+\ldots
\end{equation*}
\begin{equation*}
\hspace{4cm}\ldots+\tau(\eta_{n,i,m},Y)\widetilde{T}_m(vA_{0,n,i}^{-1}) - A(\tau(\eta_{n,i},Y)) )=
\end{equation*}
\begin{equation*}
 \psi(vA_{0,n,i}^{-1})\hspace{.1cm}\exp (\tau(\eta_{n,i,1},Y)\widetilde{T}_1(vA_{0,n,i}^{-1})+\ldots+(\tau(\eta_{n,i,k},Y)A_{0,n,i}^{-1})v+\ldots
\end{equation*}
\begin{equation*}
\hspace{4cm}\ldots+\tau(\eta_{n,i,m},Y)\widetilde{T}_m(vA_{0,n,i}^{-1}) - A(\tau(\eta_{n,i},Y))).
\end{equation*}
As this integrates to $1$ with respect to $v$, we have

\begin{equation*}
\int \bigg\{\psi(vA_{0,n,i}^{-1})\hspace{.1cm}\exp\bigg(\sum^m_{j=1} \tau(\eta_{n,i,j},Y)\widetilde{T}_j(vA_{0,n,i}^{-1})\bigg)\bigg\}dv = e^{A(\tau(\eta_{n,i},Y))}.
\end{equation*}

\noindent
Once again by Lemma 1 we have 

\begin{equation*}
\int \bigg\{v^m\psi(vA_{0,n,i}^{-1})\hspace{.1cm}\exp\bigg(\sum^m_{j=1} \eta_{n,i,j}\widetilde{T}_j(vA_{0,n,i}^{-1})\bigg)\bigg\}dv =
\end{equation*}

\begin{equation}\label{eq6}
 \frac{\partial^m}{\partial(\tau(\eta_{k,n,i},Y)A_{0,n,i}^{-1})^m}\Big[e^{A(\tau(\eta_{n,i},Y))}\Big] = A_{0,n,i}\frac{\partial^m}{\partial(\tau(\eta_{k,n,i},Y))^m}\Big[e^{A(\tau(\eta_{n,i},Y))}\Big].
\end{equation}

\noindent
Multiplying both sides of \eqref{eq6} by $e^{-A(\tau(\eta_{n,i},Y))}$, we conclude that
\begin{equation*}
 \int \bigg\{v^m\psi(vA_{0,n,i}^{-1})\hspace{.1cm}\exp\bigg(\sum^m_{j=1} \tau(\eta_{n,i,j},Y)\widetilde{T}_j(vA_{0,n,i}^{-1}) - A(\tau(\eta_{n,i},Y))\bigg)\bigg\}dv
\end{equation*}

\begin{equation*}
 = A_{0,n,i}\bigg(e^{-A(\tau(\eta_{n,i},Y))}\frac{\partial^m}{\partial(\tau(\eta_{k,n,i},Y))^m}\Big[e^{A(\tau(\eta_{n,i},Y))}\Big]\bigg),
\end{equation*}
\noindent
that is,
\begin{equation*}
\mathbb{E}[T_{k,n,i}^m|Y]= A_{0,n,i}\bigg(e^{-A(\tau(\eta_{n,i},Y))}\frac{\partial^m}{\partial(\tau(\eta_{k,n,i},Y))^m}\Big[e^{A(\tau(\eta_{n,i},Y))}\Big]\bigg).
\end{equation*}

\noindent
Thus, 
\begin{equation*}
\mathbb{E}[e^{-\theta T_{k,n,i}}|Y] =  1 + \sum_{m=1}^{\infty}\frac{(-1)^m\theta^m}{m!}\bigg\{A_{0,n,i}\bigg(e^{-A(\tau(\eta_{n,i},Y))}\frac{\partial^m}{\partial(\tau(\eta_{k,n,i},Y))^m}\Big[e^{A(\tau(\eta_{n,i},Y))}\Big]\bigg)\bigg\}.
\end{equation*}

\noindent
As before, defining 
\begin{equation*}
z_{n,i} =  \sum_{m=1}^{\infty}\frac{(-1)^m\theta^m}{m!}\bigg\{A_{0,n,i}\bigg(e^{-A(\tau(\eta_{n,i},Y))}\frac{\partial^m}{\partial(\eta_{k,n,i})^m}\Big[e^{A(\tau(\eta_{n,i},Y))}\Big]\bigg)\bigg\}
\end{equation*}

\noindent
we have
\begin{equation*}
\mathbb{E}[e^{-\theta T_{k,n}}|Y] = \prod_{\frac{i}{n}\leq t}\mathbb{E}\big[e^{-\theta T_{k,n,i}}|Y\big]  = \prod_{\frac{i}{n}\leq t}(1 + z_{n,i}),
\end{equation*}

\noindent
and 
\begin{equation*}
 \sum_{\frac{i}{n}\leq t}z_{n,i}= \sum_{m=1}^{\infty}\frac{(-1)^m\theta^m}{m!}\sum_{\frac{i}{n}\leq t}\bigg\{A_{0,n,i}\bigg(e^{-A(\tau(\eta_{n,i},Y))}\frac{\partial^m}{\partial(\tau(\eta_{k,n,i},Y))^m}\Big[e^{A(\tau(\eta_{n,i},Y))}\Big]\bigg)\bigg\}.
\end{equation*}

\noindent
We are interested in the convergence of the above sum of the $z_{n,i}$ as $n \longrightarrow \infty$ and using arguments parallel to those in the proof of Theorem 1 we can show

\noindent
\begin{equation*}
 \sum_{\frac{i}{n}\leq t}z_{n,i} \longrightarrow -\int(1-e^{-\theta T_k(s)})dL_t(s).
\end{equation*}

\noindent
We may now invoke Lemma 4 to obtain 
\begin{equation*}
\mathbb{E}[e^{-\theta T_{k,n}(s)}|Y] \longrightarrow \exp\bigg\{- \int(1-e^{\theta T_k(s)})dL_t(s)\bigg\}
\end{equation*}
as $n \longrightarrow +\infty$, hence
\begin{equation*}
\mathbb{E}\Big[\exp\Big(-\sum^j_{i=1}\theta_iT_{k,n}(t_{i-1},t_i]\Big)\big|Y\Big] \longrightarrow \exp\bigg\{- \sum^j_{i=1 }\int(1-e^{\theta T_k(s)})dL_{(t_{i-1},t_i]}(s)\bigg\},
\end{equation*}
\noindent
where $\theta_i > 0 \hspace{.1cm} \forall i=1,\ldots,j$. Once again, by the Cram\'{e}r-Wold device and Theorem 16.16 in \cite{Kallenberg}, the finite dimensional distributions of $\{T_{k,n}(s)|Y\}$ converge properly. By a result of \cite{Choksi} the conditional finite dimensional distributions of the process determine the conditional distribution of the process, hence the proof of the theorem is complete.

\section{Examples}

In this section we provide examples of the applications of Theorem 1 and Theorem 2. First, with regards, to Theorem 1, we demonstrate that the L\'{e}vy measures for the beta and gamma processes are obtainable from the L\'{e}vy measure representation in \eqref{eq2}.  We follow up with the computation of a L\'{e}vy measure for a process with infinitesimally Pareto distributed increments.  

The methodology employed in the examples illustrating the use of Theorem 1 stands in contrast to the L\'{e}vy measure decomposition procedure demonstrated in \cite{Wang}. This decomposition procedure requires one to have the L\'{e}vy measure for the completely random measure in hand, and then after examining the particular form of the measure, apply a number of series expansions and identities particular to the moments of the process to arrive at the decomposition. Thus, starting with a completely random measure, the researcher arrives at a decomposition which, while permitting simulation of the process, does not allow the researcher to specify the L\'{e}vy measures which are used to generate the simulation. \cite{Wang} acknowledge this in by noting that all that can be said about the components of the resulting decomposition are that they are L\'{e}vy processes. In comparison, employing the construction in Theorem 1, our procedure allows the researcher to specify \emph{a priori} the specific forms of the L\'{e}vy measures, prove the existence of the corresponding processes, and then arrive at the desired completely random measure via an infinite sum.

We continue this section by demonstrating the simplicity with which one can compute prior parameters when employing Theorem 2. This computation is performed for two nonparametric conjugate pairs. The first consists of a log-normal process for the likelihood, and a generalized gamma process for the posterior, while the second employs a Pareto process for the likelihood and a generalized gamma process for the posterior.  

\subsection{The beta process}

In \cite{Wang} the beta process $B$ with corresponding L\'{e}vy measure given by  $\nu(ds,dz) = c(z)s^{-1}(1-s)^{c(z) - 1}dsd\mu(z)$ is decomposed into infinite sums $B=\sum_n B_n$ and $\nu = \sum_n \nu_n$ where the $B_n$ are L\'{e}vy processes with  L\'{e}vy measures $\nu_n(ds,dz)$, given by

\begin{equation}\label{eq8}
\nu_n(ds,dz) = \mbox{Beta}(1,c(z)+n)ds\frac{c(z)}{c(z)+n}d\mu(z).
\end{equation}

The function $c(z)$ in the above is assumed to be a piecewise continuous positive function on $(0,\infty)$. The decomposition of the L\'{e}vy measure $\nu$ into an infinite series with components given by \eqref{eq5} is achieved by simply writing the $s^{-1}$ term in $\nu$ as an infinite series and then distributing the remaining terms, as well as employing identities for the gamma function.

Rather than beginning with the measure $\nu$ and then subsequently arriving at the form of the decomposition, the L\'{e}vy measures in \eqref{eq8} can be constructed directly from Theorem 1, hence we will verify that the conditions of Theorem 4 are satisfied. 

First, the sufficient statistics for the beta distribution are $T_1(x) = \ln(x)$ and $T_2(x) = \ln(1-x)$ and both of these functions has an infinitely differentiable inverse, hence condition 1 of the theorem is satisfied.  Second,  defining $\eta(z)=(1,c(z)+n)$ yields a function satisfying condition 2 of the theorem. Finally, since the natural parameter space $\hspace{.05cm}\Xi$ of the beta distribution consists of all $(\alpha,\beta)$ where $\alpha$,$\beta > 0$ we see that for any $\varepsilon > 0$ if $(\alpha,\beta)$ $\in$ $\Xi$ then both  $(\varepsilon \alpha,\beta)$ and  $(\alpha,\varepsilon \beta)$ are in $\Xi$. Thus condition 3 is satisfied. As all the requisite conditions are satisfied, we can construct processes $B_n$ for $n=1,\ldots$ whose L\'{e}vy measures have the form in \eqref{eq7}. This is achievable by Theorem 1 as each $\nu_n$ is of the form in \eqref{eq8} with 
$A_0(z)=(c(z) / (c(z)+n)) F(z),$
where $F(z)$ is the cumulative distribution of $\mu$, resulting in an $A_0(z)$ which is a positive, increasing function which is right continuous with left hand limits.

\subsection{The gamma process}

Similar to the decomposition of the L\'{e}vy measure for the beta process, in~\cite{Wang} a decomposition for the gamma process L\'{e}vy measure
 
\begin{equation}\label{eq9}
\nu(ds,dz) = s^{-1}\exp\bigg(\frac{-s}{c(z)}\bigg)dsd\mu(z)
\end{equation}
is obtained. In this case the L\'{e}vy measure is decomposed into a doubly indexed infinite sum of L\'{e}vy measures $\nu_{k,h}(ds,dz)$,  where for every $k,h$

\begin{equation}\label{eq10}
\nu_{k,h}(ds,dz) = \mbox{Gamma}\bigg(h,\frac{c(z)}{k+1}\bigg)ds\frac{d\alpha(z)}{(k+1)^hh}.
\end{equation}
The decomposition of the L\'{e}vy measure in \eqref{eq9} into a doubly indexed infinite series with components given by \eqref{eq10} is achieved not by a simple series expansion of a term in \eqref{eq9}, but rather by an iterative procedure of rewriting  the exponential term in the measure as a product of terms, one of which may be series expanded. This step decomposes the L\'{e}vy measure into a sum of two measures and repeated application of this process produces the doubly indexed infinite series of measures. 
 
Again, we may avoid decomposition techniques particular to a given L\'{e}vy measure, and instead construct each $\nu_{k,h}$ directly from Theorem 1. We see that  each $\nu_{k,h}$ is of the form in \eqref{eq6} with 
\begin{equation*}
\eta(z)=\bigg(h,\frac{c(z)}{k+1}\bigg)\hspace{.2cm} \mbox{ and } \hspace{.2cm} A_0(z)=\frac{d\alpha(z)}{(k+1)^hh}F(z),
\end{equation*}
where $c(z)$ is a positive piecewise continuous function, and $F(z)$ is the cumulative distribution of $\alpha$.  Noting that the sufficient statistics for the gamma distribution are $T_1(x) = \ln(x)$ and $T_2(x) = x$,both of which have an infinitely differentiable inverse, condition 1 of Theorem 1 is satisfied. Again, similar to the case for the beta distribution,  the natural parameter space $\Xi$ of the gamma distribution consists of all $(\alpha,\beta)$ where $\alpha$,$\beta > 0$. From this it follows that conditions 2 and 3 of Theorem 1 are also satisfied.

\subsection{Construction of a completely random measure based on the Pareto distribution}

This section illustrates the construction of a completely random measure whose infinitesimal increments are \emph{Pareto distributed}. We will demonstrate the procedure using the change of variables formula from Theorem 1 explicitly. 

To begin consider the density function for a Pareto distributed random variable $p(u|\alpha) = \frac{\alpha u_m^{\alpha}}{u^{\alpha + 1}}$. In this form $\alpha$ $>0$ is the shape parameter, $u_m$ is the scale parameter, and the support of $p(u|\alpha)$ is $[u_m,\infty)$. Writing $p(u|\alpha)$ in canonical form yields $p(u|\alpha) = \exp\big(-(\alpha + 1)\ln(u)-(-\ln(\alpha)-\alpha \ln(u_m)\big)$. From this we see that there is one natural parameter, $-(\alpha + 1)$, and the corresponding sufficient statistic is $T(u)=\ln(u)$. Our goal is to choose an initial exponential family, apply the construction in Theorem 1, and then arrive,  after the change of variable,  at a form for the density of the L\'{e}vy measure in (3) which conforms to the density of a Pareto distribution. Thus, we consider the exponential family with sufficient statistics $T_1(x)=\mbox{ln}(x), T_2(x)=\mbox{ln}(\mbox{ln}(x))$, and natural parameters $ \eta_1 = -1, \eta_2 = -(\alpha + 1)$. We note that on $(1,+\infty)$ both sufficient statistics have a well defined inverse which is differentiable. Second, integration by parts on the integrand $x^{-1}(\mbox{ln}(x))^{-(\alpha +1)}$ shows that $(\eta_1, \eta_2)$ is in the natural parameter space of $p(x|\eta)$. Similarly, we also see that the natural parameter space is closed under contraction towards $0$. Hence we may apply the construction of Theorem 1 to produce a L\'{e}vy process $T$ so that the density of the corresponding L\'{e}vy measure, after performing the change of variables to express the L\'{e}vy measure as in equation (3) of the theorem, has the form

\begin{equation*} 
 \exp\big\{-(\alpha + 1)\ln(u) - u -(-\ln(\alpha)-\alpha \ln(u_m)\big\}e^u = \frac{\alpha u_m^{\alpha}}{u^{\alpha + 1}}
\end{equation*}
\noindent
which is a Pareto density.

More generally, taking $\alpha=\alpha(z)$, a positive piecewise continuous function on $(0,\infty)$, we see that conditions 2 and 3 of Theorem 1 are satisfied. Likewise, the sufficient statistic $T(x)$ chosen in the previous paragraph satisfies condition 1. Thus, there exists a L\'{e}vy process whose L\'{e}vy measure is given by $\alpha(z)u_m^{\alpha(z)}u^{-(\alpha(z)+1)}dA_0(z)du$ where $A_0(z)$ is any function satisfying the conditions of Theorem 1.

Note that, analogously to the previous two examples, we may consider the case where we wish to choose specific forms for $\alpha(z)$ for all $n=1,2,\ldots$, yielding $\{\alpha_n(z)\}_{n=1}^{\infty}$. We may produce $\nu_n(dz,du)=\alpha_n(z)u_m^{\alpha_n(z)}u^{-(\alpha_n(z)+1)}dA_0(z)du$ by applying the above construction to each $\alpha_n(z)$. Note that the $\nu_n(dz,du)$ for $n=1,2,\ldots$ are the L\'{e}vy measures acting as the intensity parameters for the countably many Poisson processes used to simulate the completely random measure whose corresponding L\'{e}vy measure is given by
\begin{equation*}
\sum_{n=1}^{\infty}\alpha_n(z)u_m^{\alpha_n(z)}u^{-(\alpha_n(z)+1)}dA_0(z)du.
\end{equation*}
Fixing an $\alpha(z)$ so that defining $\alpha_n(z)=n\alpha(z)$ and $dA_{0,n} = \frac{1}{n\alpha(z)}dz$ for $n=1,\ldots$, all $\alpha_n(z)$ and $A_{0,n}(z)$ satisfy the condition of Theorem 1, the above expression becomes
\begin{equation*}
\sum_{n=1}^{\infty}n\alpha(z)u_m^{n\alpha(z)}u^{-(\alpha(z)+1)}\frac{1}{n\alpha(z)}dzdu = \bigg(1 + \frac{u_m^{\alpha(z)}}{u^{-(\alpha(z)+1)}- u_m^{\alpha(z)}}\bigg)dzdu.
\end{equation*}

Thus, we have proved the existence of a completely random measure whose L\'{e}vy measure decomposition consists of the Pareto densities of our choice and whose composition of L\'{e}vy measures is obtained by the closed form expression
\begin{equation*}
dL_t(u) = \bigg\{\int_0^t\bigg(1 + \frac{u_m^{\alpha(z)}}{u^{-(\alpha(z)+1)}- u_m^{\alpha(z)}}\bigg)dz\bigg\}du.
\end{equation*}

Employing the L\'{e}vy measures $\{\nu_n\}_{n=1}^N$, where $N$ is a chosen level of truncation, simulation of this completely random measure may be achieved by Algorithm 1, analogous to the sampling algorithms based on decompositions of the beta and gamma processes.

\begin{algorithm}[t]
\begin{algorithmic}
\label{alg1}
\caption{Sampling algorithm for a CRM with L\'{e}vy measure Pareto density}
\REQUIRE $N$, $A_{0,n}(z)$, $\alpha(z)_nx_m^{\alpha_n(z)}u^{(-\alpha_n(z)+1)}$ for $n=1\ldots N$
\FOR{$n=1 \rightarrow N$}
\STATE $m_n \leftarrow$ Poisson$(\int A_{0,n}(dz))$
\FOR{$j=1 \rightarrow m_n$}
\STATE $z_{j,n} \xleftarrow{i.i.d.} \frac{A_{0,n}(dz)}{\int A_{0,n}(dz)}$
\STATE $u_{j,n} \xleftarrow{i.i.d.}  \alpha(z)_nu_m^{\alpha_n(z)}u^{(-\alpha_n(z)+1)}$
\ENDFOR
\ENDFOR
\RETURN $\bigcup_{n=1}^N \{(z_{j,n},u_{j,n})\}_{j=1}^{m_n}$
\end{algorithmic}
\end{algorithm}

\subsection{Conjugacy of a log-normal likelihood process and the gamma process}

In the remaining two sections of the examples we illustrate the simplicity of applying Theorem 2 to compute the posterior parameters for the nonparametric conjugate pair consisting of a log-normal likelihood process as the data generating process, and the gamma process as the prior. For ease of exposition we take the drift of our log-normal likelihood process to be zero, i.e. $\mu=0$. Note that this assumption can be easily dropped, although we do require $\mu$ to be known, as it is under this condition that the log-normal and gamma \emph{parametric} distributions are known to be conjugate. Thus, as we are assuming $\mu=0$, the parameter of interest is the variance of the likelihood process, namely $\sigma$.

The model assumption is that $\sigma$ is drawn from a gamma process constructed as per Theorem 1. Note, that is an important distinction for our prior under consideration, as we may construct a stochastic process $T(t)$ whose L\'{e}vy measure density is \emph{proper} and Gamma$(\alpha, \beta)$ distributed. This is in direct contrast to the well known gamma process appearing in section 5.2 and in \cite{Wang} for example, as the density of the process in that case is \emph{improper}. Hence we will call this process a \emph{generalized gamma} process and denote it by $\mbox{GenG}(\alpha(z),\beta(z)A_0(dz))$. 

We wish to employ Theorem 2 to compute the posterior parameters of the nonparametric process $T(t)|S_1,\ldots,S_n$, where $T(t) \sim \mbox{GenG}(\alpha(z),\beta(z)A_0(dz))$ and $S_1,\ldots,S_n$ are independent observations of the log-normal likelihood process with zero drift. Note that the form of the distribution of the infinitesimal increments of $T(t)$ is a consequence of the definition of the $T_{k,n,i}$ appearing in the construction of Theorem 1, and the scaling property of the gamma distribution.

In order to derive the posterior parameters of $T(t)|S_1,\ldots,S_n$ we must check the conditions of Theorem 2. To begin, the form of the function $\tau$ may be read off directly from the well known form for the posterior parameters in the parametric case. Thus, we have 
\begin{equation}
\tau(\alpha,\beta,\mu,x_1,\ldots,x_n)=\Big(\alpha + \frac{n}{2},\beta + \frac{\sum (\mbox{ln}(x_i)-\mu)^2}{2}\Big)
\end{equation}
\noindent
Next, conditions $1. - 3.$ follow directly from the fact that the prior process is constructed via Theorem 1 and thus from Theorem 2 we may conclude that 
\begin{equation}
T(t)|S_1,\ldots,S_n \sim \mbox{GenG}\Big(\alpha(z) + \frac{n}{2},\beta(z)A_{0,n,i}(dz) + \frac{\sum (\mbox{ln}(S_i)-\mu)^2}{2}\Big).
\end{equation}

\subsection{Conjugacy of a Pareto likelihood process and a Generalized gamma prior}

Consider the nonparametric conjugate pair consisting of a Pareto likelihood and a gamma prior. That a CRM exists with a density for its L\'{e}vy measure taking the form of a general gamma distribution is given by our main existence theorem. The construction of the Pareto likelihood is analogous to the construction of the Bernoulli likelihood from the beta process and was presented in the general case in section 4.2 of the present work. In this case we form the generalized gamma CRM from a concentration parameter $c$ and a base measure $G_0$, denoted by GenG($c$,$G_0$), by taking the parameters $a$ and $b$ in the gamma density to be $c$ and $cG_0$ respectively and invoking our main existence theorem.

Recalling the form of conjugacy in the parametric case for the Pareto likelihood and gamma prior, we find, employing Theorem 1, that given Pareto observations $X$ and our GenG($c$,$G_0$) prior, conjugacy in the nonparametric case is expressed as 
\begin{equation*}
 G|X_1,\ldots,X_n \sim \mbox{GenG}\left(c+n, \frac{c}{c+n}G_0 + \frac{1}{c+n}\sum \mbox{ln}\left(\frac{X_i}{x_m}\right)\right)
\end{equation*} 
 
\noindent
where $c$ is the concentration parameter, and $G_0$ is the initial measure. In fact, from the classical parametric form of conjugacy between the Pareto and gamma distributions, if $a$ and $b$ are the parameters governing the gamma distribution, then given $n$ observations of a Pareto random variable, the parameters of the posterior gamma distribution are given by 
\begin{equation*}
a_{post} = a + n,\hspace{1cm}b_{post} = b + \sum \mbox{ln}\left(\frac{X_i}{x_m}\right)
\end{equation*}

\noindent
Now, our definition of a GenG($c$,$G_0$) process is taken to be a stochastic process whose infinitesimal increments are  Gamma$(c,cG_0)$ distributed, i.e. $a = c$ and $b = cG_0$, and thus we see that the parameters of the posterior GenG($c$,$G_0$) process are given by 
\begin{equation*}
c+n = a + n
\end{equation*}

\noindent
and 
\small
\begin{equation*}
(c+n)\Big(\frac{c}{c+n}G_0 + \frac{1}{c+n}\sum \mbox{ln}\left(\frac{X_i}{x_m}\right)\Big)=cG_0 + \sum \mbox{ln}\left(\frac{X_i}{x_m}\right) = b + \sum \mbox{ln}\left(\frac{X_i}{x_m}\right).
\end{equation*}
\normalsize
\noindent
Hence, the form of the base measure of the posterior process is the weighted sum of the initial measure and an atomic component given by 
\begin{equation}
\frac{c}{c+n}G_0 + \frac{1}{c+n}\sum \mbox{ln}\left(\frac{X_i}{x_m}\right).
\end{equation}

\section{Conclusion and future work}

 Through our generalization of Hjort's construction via sufficient statistics we have addressed the problem of obtaining a general construction of prior distributions over infinite dimensional spaces possessing distributional properties amenable to conjugacy. In addition we have defined a theory of conjugacy for positive exponential families in infinite dimensional spaces analogous to the current theory of conjugacy for exponential families in a finite dimensional setting. This theory allows for explicit and efficient computation of nonparametric posterior parameters as a function of the given data and prior parameters. As such, a broader palette of nonparametric conjugate models, including both discrete and continuous likelihoods, are made available to the researcher.

Future work extending and complimenting the results in this paper will address a number of opportunities provided by the results achieved thus far. A detailed investigation of beta and gamma processes with improper vs. proper L\'{e}vy measure densities will aim to not only compare and contrast to properties of such processes, but will also attempt to reconcile in a rigorous manner the derivability of one from the other. 

Other certain areas of exploration will include applications of various nonparametric conjugate pairs, e.g. uniform and Pareto, to problem domains such as dynamic graphs, time series analysis, and topic modeling to name a few. In addition, the possibility of deriving efficient sampling algorithms from the construction and proof techniques of Theorems 1 and 2 provides an exciting avenue of future research.

\nocite{*}
\bibliography{Finn_Kulis}{}
\bibliographystyle{unsrt}

\end{document}